\documentclass[10pt,twocolumn,letterpaper]{article}

\usepackage[pagenumbers]{cvpr}  %

\usepackage{booktabs}
\usepackage{multirow}
\usepackage{pifont}
\usepackage{graphicx}
\usepackage{tikz}
\usetikzlibrary{positioning,arrows.meta,calc,fit,backgrounds}
\definecolor{blueG}{RGB}{36,113,163}\definecolor{orangeP}{RGB}{230,126,34}\definecolor{greenO}{RGB}{88,184,120}
\usepackage{amsmath,amssymb,amsthm}
\usepackage{algorithm}
\usepackage[noend]{algpseudocode}
\newtheorem{proposition}{Proposition}
\newcommand{\abFCI}{+0.09 [+0.03, +0.18]}
\newcommand{\abFCIHome}{$-$0.00 [$-$0.03, +0.02]}
\newcommand{\abForceCat}{56}

\newcommand{\abForceOurs}{42}

\newcommand{\abQAnsCat}{3}
\newcommand{\abQAnsOurs}{22}
\newcommand{\abQCI}{+0.039 [+0.011, +0.076]}
\newcommand{\abQCIHome}{$-$0.002 [$-$0.012, +0.006]}

\newcommand{\abQFSOurs}{9}

\newcommand{\abTwoPost}{0.270}

\newcommand{\abTwoPt}{0.460}

\newcommand{\boundShare}{39}

\newcommand{\ciPoolHierClean}{[$-$0.09, +0.16]}

\newcommand{\ciPoolPriorClean}{[$-$0.33, $-$0.01]}

\newcommand{\colBox}{40}
\newcommand{\colN}{110}
\newcommand{\colOurs}{23}

\newcommand{\dPoolHierClean}{+0.03}

\newcommand{\dPoolPrior}{$-$0.22}
\newcommand{\dPoolPriorClean}{$-$0.15}

\newcommand{\fbAcc}{48}

\newcommand{\fbHomes}{16}
\newcommand{\fbMajority}{55}
\newcommand{\fbN}{249}
\newcommand{\fbOrient}{+xy}
\newcommand{\fbOursAcc}{86}

\newcommand{\forceHomes}{17}

\newcommand{\gpAccAns}{39}

\newcommand{\gpDrawerN}{106}

\newcommand{\homesBetter}{13}
\newcommand{\homesClear}{11}
\newcommand{\kinFive}{0.99}
\newcommand{\kinOne}{0.68}
\newcommand{\kinTwo}{0.84}
\newcommand{\mapClear}{27}

\newcommand{\mapPriChance}{17}
\newcommand{\mapPriFirst}{77}

\newcommand{\mapRevChance}{29}
\newcommand{\mapRevFirst}{64}

\newcommand{\nHomes}{18}
\newcommand{\nHomesPred}{17}

\newcommand{\obsShareHome}{43}
\newcommand{\obsShareOurs}{41}
\newcommand{\obsSharePrior}{26}

\newcommand{\priorShare}{31}

\newcommand{\ratioCleanBoth}{0.60}

\newcommand{\rigCases}{263}
\newcommand{\rigCrps}{0.544}
\newcommand{\rigFail}{42}
\newcommand{\rigLower}{41}
\newcommand{\rigN}{200}
\newcommand{\rigNoFit}{21}
\newcommand{\rigOurs}{0.323}
\newcommand{\rigOursRmse}{0.485}
\newcommand{\rigRmse}{0.475}
\newcommand{\riskN}{420}
\newcommand{\riskPriorHi}{0.14}
\newcommand{\riskPriorLo}{0.10}

\newcommand{\semCommonCI}{+0.04 [$-$0.00, +0.08]}

\newcommand{\semCommonN}{96}

\newcommand{\spReachShare}{74}

\newcommand{\spShortCI}{+0.048 [$-$0.039, +0.126]}

\newcommand{\tOneAcc}{88.6}
\newcommand{\tOneAccDoor}{93}
\newcommand{\tOneAccDrawer}{84}
\newcommand{\tOneAxDoor}{3.9}
\newcommand{\tOneAxDrawer}{2.5}
\newcommand{\tOneBrier}{0.106}
\newcommand{\tOneBrierPrior}{0.250}
\newcommand{\tOneConfWrong}{0.83}
\newcommand{\tOneHomes}{18}
\newcommand{\tOneN}{306}
\newcommand{\tOnePosDoor}{2.7}

\newcommand{\tTwoHomes}{18}

\newcommand{\tauFactorFit}{1.56}

\newcommand{\timeOursCase}{13}
\newcommand{\timeOursColl}{2.5}
\newcommand{\timeOursDyn}{1.5}
\newcommand{\timeOursJoint}{1.2}
\newcommand{\timeOursQuery}{0.03}
\newcommand{\timeOursResp}{4}
\newcommand{\timeRig}{230}
\newcommand{\timeRigN}{8}

\newcommand{\vRef}{0.14}

\newcommand{\simKitchens}{120}
\newcommand{\simParts}{4428}
\newcommand{\simValid}{3838}
\newcommand{\simBlocked}{503}
\newcommand{\simJammed}{65}
\newcommand{\simInvalid}{22}
\newcommand{\simNoisePos}{4.6}
\newcommand{\simNoiseRot}{2.8}
\newcommand{\simNoiseF}{0.9}
\newcommand{\simNoiseTc}{0.14}
\newcommand{\simTOneN}{4327}

\newcommand{\simOpen}{24}

\newcommand{\simOursOracle}{0.057}

\newcommand{\simForceHomeCI}{+0.020 [+0.008, +0.034]}

\newcommand{\simMixed}{72}
\newcommand{\simForceN}{3707}

\newcommand{\simCollBlocked}{471}
\newcommand{\simFA}{13}
\newcommand{\simMissed}{59}
\newcommand{\simFABox}{15}
\newcommand{\simMissedBox}{61}
\newcommand{\simRigN}{12320}
\newcommand{\simRigParts}{3080}
\newcommand{\simRigKitchens}{120}
\newcommand{\simRigFail}{38}
\newcommand{\simRigAttempt}{3835}
\newcommand{\simNoFit}{717}

\definecolor{cvprblue}{rgb}{0.21,0.49,0.74}
\usepackage[pagebackref,breaklinks,colorlinks,allcolors=cvprblue]{hyperref}

\title{PhysVLMM: Interaction-Conditioned Physical Belief Maps\\for Articulated Objects in Robot Scene Maps}
\author{Dibyendu Ghosh\\}

\begin{document}
\maketitle
\begin{abstract}
Mobile robots must open doors and drawers, yet their scene maps record what each object is and whether it has moved, not how it will respond to a pull. This paper proposes PhysVLMM, a method that maintains a physical belief for every articulated part of a Vision-Language-Motion Map and answers a planner's queries about it, with two algorithms: a map update and a query. Watching a part move provably cannot identify its dynamics, and a measured pull identifies only the pulled part, up to one direction. PhysVLMM therefore (i) keeps that direction as a posterior instead of a point estimate, (ii) passes the evidence of each pull in closed form to the untouched look-alike parts of the map, and (iii) answers whether an action will succeed at a stated risk, or abstains. On real doors and drawers in \nHomes{} homes, it identifies the joint type in \fbOursAcc\,\% of held-out pulls (\fbAcc\,\% for a learned single-view method) and predicts the response to a pull with \rigLower\,\% lower error than differentiable simulation (CRPS \rigOurs{} vs.\ \rigCrps), four orders of magnitude faster (\timeOursCase\,ms vs.\ \timeRig\,s per case). The posterior lowers the final-position error from \abTwoPt{} to \abTwoPost, sharing lowers the force error of untouched parts from \abForceCat\,\% to \abForceOurs\,\%, and at every stated risk from 0.02 to 0.30 the false-safe answers stay within it. Pooling all measured parts of a home does as well as the look-alike groups, and simpler variants predict where a pulled part stops at least as well.
\end{abstract}

\section{Introduction}\label{sec:intro}
Mobile manipulators are expected to fetch objects from cabinets, wardrobes and fridges in buildings they have mapped, and each such task begins by opening an articulated part, such as a door, a drawer, a lid or a window. To plan the opening, a robot has to answer several questions about the part in front of it:
\begin{itemize}
\item Is it movable at all, and has it moved since it was last seen?
\item Which way does it open?
\item Will the pull the robot can apply keep it moving?
\item Will it hit something on the way?
\end{itemize}
The first question is about the object; the others are about how the object responds to an action. A robot that answers them wrongly stalls, damages the furniture, or hits a nearby obstacle.

Over the last decade, several approaches have been developed to model articulated objects. Probabilistic kinematic models select and track the joint type of a part~\cite{sturm2011probabilistic,martinmartin2014online}, and active variants choose actions to reduce their uncertainty~\cite{hausman2015active}. Learned models predict how the points of a part move from a single view~\cite{eisner2022flowbot3d} or segment actionable parts and their poses~\cite{geng2023gapartnet}. At scene scale, open-vocabulary maps tell a robot what and where every object is~\cite{gu2024conceptgraphs}, and Vision-Language-Motion Maps (VLMM) add whether it moves and whether it is movable~\cite{ghosh2026vlmm}. Articulated scene graphs add the joint type and axis of every door and drawer~\cite{buchner2026momasg}, interaction-based methods reconstruct articulated objects as digital twins~\cite{hsu2023ditto}, and physics-aware scene graphs attach materials and mass inferred from images~\cite{li2026physgraph,chopra2026physgs}. However, none of these maps predicts the response of a part to a proposed action. Methods that do estimate dynamics treat one object at a time: ForceTwin~\cite{engelbracht2026forcetwin} identifies inertia, friction, damping and closer forces from a handheld force-sensing gripper, RigPI~\cite{he2026rigpi} refines a vision-language prior with force data by differentiable simulation, and Heiden \etal~\cite{heiden2022probabilistic} keep a posterior over simulator parameters. Each part must therefore be probed before anything is known about its dynamics; ForceTwin explicitly leaves the transfer to unprobed parts to future work~\cite{engelbracht2026forcetwin}.

This limitation is fundamental. Without a measured force, any combination of friction, damping and inertia reproduces any observed motion (\cref{prop:passive}), so watching a part being used identifies its geometry but not its dynamics, and a measured pull informs only the pulled part (\cref{prop:measured}). The only measured evidence about a part that has not been touched therefore comes from other parts of the building that have been measured. This paper proposes PhysVLMM, an interaction-conditioned physical belief map. Each VLMM element is extended with its articulation geometry, a belief over its mechanism and dynamics that retains the alternatives the evidence cannot separate, and a response distribution over motion, contact and collision for a proposed action. Because a building contains many look-alike parts of the same design, installed together, the map shares this evidence among them. The contributions of this paper are:
\begin{itemize}
\item PhysVLMM, a method that maintains a physical belief for every articulated part of a scene map, whose map update (\cref{alg:update}) keeps the direction a one-way pull cannot identify as a posterior and passes each measured pull in closed form to the untouched look-alike parts;
\item its query (\cref{alg:query}), which answers whether an action will keep a part moving and clear of obstacles at a stated risk, or abstains, in \timeOursQuery\,ms per query;
\item an evaluation on real doors and drawers of \nHomes{} homes of the Hoi! dataset~\cite{engelbracht2026hoi} and \simKitchens{} simulated kitchens, against published methods run with their official code and an ablation of each algorithm step. Movability, the first question, is answered by VLMM and is not re-evaluated.
\end{itemize}

\begin{figure*}[t]\centering
\resizebox{\textwidth}{!}{%
\begin{tikzpicture}[font=\footnotesize, >={Latex[length=2mm]},
  blk/.style={draw=#1!55!black, rounded corners=3pt, fill=#1!8, minimum width=2.05cm, minimum height=2.45cm, inner sep=0pt},
  ttl/.style={font=\footnotesize\bfseries, align=center},
  sub/.style={font=\scriptsize, align=center},
  arr/.style={->, thick, draw=black!75},
  lab/.style={font=\scriptsize, text=black!70}]
\newcommand{\thumb}[5]{%
  \node[ttl] at ($(#1.north)+(0,-0.27)$) {#4};
  \node at ($(#1.center)+(0,0.0)$) {\includegraphics[height=0.95cm, width=1.8cm, keepaspectratio]{#3}};
  \node[sub] at ($(#1.south)+(0,0.3)$) {#5};}
\node[inner sep=0pt] (frame) at (1.25,0) {\includegraphics[width=2.75cm]{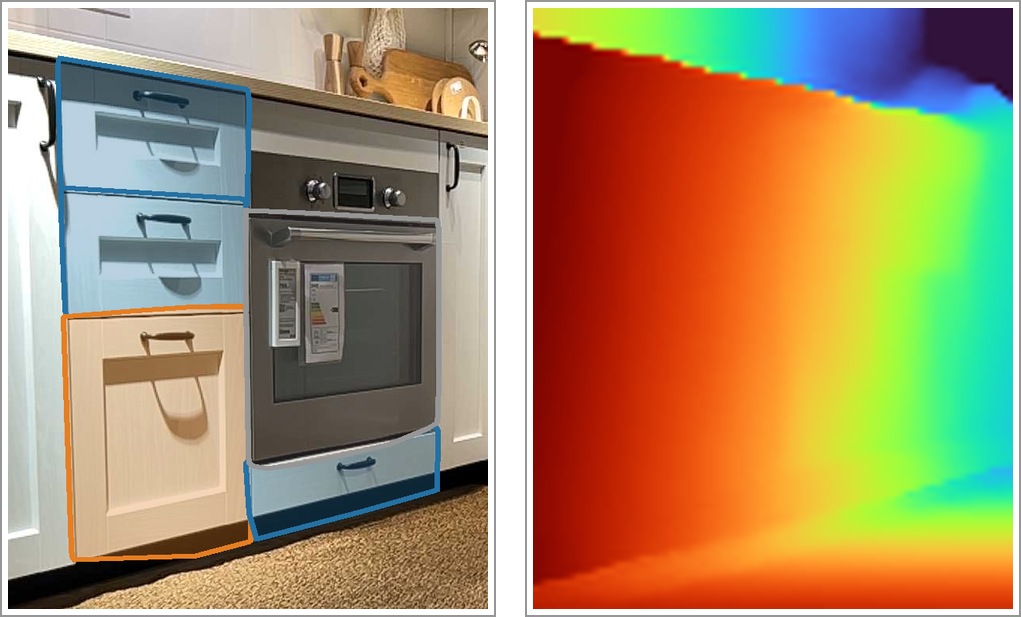}};
\node[ttl, above=2pt of frame] {RGB-D + poses};
\node[sub, below=2pt of frame] (sencap) {frames, part masks, scan points\\ {\color{blueG}$\blacksquare$} look-alike group \ {\color{orangeP}$\blacksquare$} pulled};
\node[blk=gray] (m1) at (4.15,0) {}; \node[ttl] at ($(m1.north)+(0,-0.27)$) {M1 VLMM};
\node[sub] at (m1.center) {(frozen)\\[3pt] \footnotesize elements\\ \footnotesize $e_i$~\eqref{eq:vlmm}: $\mathbf f_i, y_i, \dots$};
\node[blk=blue] (m3) at (6.6,0) {}; \thumb{m3}{}{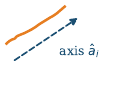}{M3 Mechanism}{$P(m_i)$, axis $\hat a_i, \mathbf a_i$}
\node[blk=greenO, fill=greenO!22, draw=greenO!65!black, line width=1.1pt, minimum width=2.35cm, minimum height=2.75cm] (m4) at (9.25,0) {};
\node[ttl, text=greenO!25!black] at ($(m4.north)+(0,-0.42)$) {M4 Belief map\\[-1pt](ours)};
\node at ($(m4.center)+(0,-0.05)$) {\includegraphics[height=1.0cm]{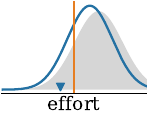}};
\node[sub] at ($(m4.south)+(0,0.36)$) {$\mathcal B_{i,t}(m,\theta,\mathbf q)$, prior\\ from look-alikes~\eqref{eq:pred}};
\node[blk=green] (m5) at (11.9,0) {}; \node[ttl] at ($(m5.north)+(0,-0.27)$) {M5 Response};
\node at ($(m5.center)+(-0.15,0.0)$) {\includegraphics[height=1.0cm]{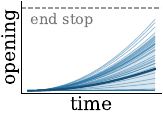}};
\node at ($(m5.center)+(0.72,0.45)$) {\includegraphics[height=0.42cm]{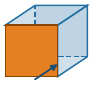}};
\node[sub] at ($(m5.south)+(0,0.3)$) {rollouts under $u$ +\\[-1pt] swept volume: $\mathcal R_{i,t}$};
\node[blk=green] (m6) at (14.35,0) {}; \thumb{m6}{}{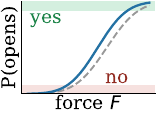}{M6 Query}{action $u$, risk $\delta$}
\node[blk=yellow, minimum width=1.6cm] (ans) at (16.5,0) {}; \node[ttl] at ($(ans.north)+(0,-0.27)$) {Answer};
\node[sub, font=\footnotesize] at (ans.center) {yes / no /\\ insufficient\\ evidence};
\node[blk=blue, minimum width=2.45cm] (m2) at (4.35,-3.05) {}; \thumb{m2}{}{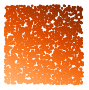}{M2 Part geometry}{link poses, points $\mathcal P_i$}
\node[blk=orange] (act) at (9.25,-3.05) {}; \thumb{act}{}{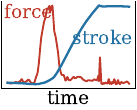}{Actions + F/T}{measured wrench}
\draw[arr] (frame.east) -- (m1.west); \draw[arr] (m1) -- (m3); \draw[arr] (m3) -- (m4); \draw[arr] (m4) -- (m5); \draw[arr] (m5) -- (m6); \draw[arr] (m6) -- (ans);
\draw[arr] (sencap.south) |- (m2.west);
\draw[arr] (m2.east) -| node[lab, pos=0.3, above]{link motion} (m3.south);
\draw[arr] (act.north) -- node[lab, right]{effort, $\Phi$, $\boldsymbol\tau_a$} (m4.south);
\draw[arr, dashed, draw=black!50] (m1.north) -- ++(0,0.42) -| node[lab, pos=0.25, above]{$y_i$: movable, moved} (ans.north);
\node[font=\scriptsize, text=red!60!black, align=left, anchor=west] at (10.85,-3.05) {$\bullet$ no double counting: $o_i, \kappa_i$ and the\\ \phantom{$\bullet$ }link motion come from the same images\\[2pt] $\bullet$ unknown forces: marginalised\\ \phantom{$\bullet$ }as fit noise};
\node[font=\small] (capa) at (8.9,-5.15) {(a)};
\draw[dashed, draw=black!30] (17.6,1.75) -- (17.6,-4.9);
\begin{scope}[shift={(18.3,1.05)},
  var/.style={circle, draw, minimum size=0.7cm, inner sep=0pt, font=\scriptsize}]
\node[var, fill=gray!15] (muc) at (1.95,0) {$\mu_c$};
\node[lab, anchor=west, align=left] at ($(muc.east)+(0.05,0)$) {category\\[-2pt]prior};
\node[var] (g1) at (1.0,-1.35) {$\mu_g$}; \node[var] (g2) at (3.3,-1.35) {$\mu_{g'}$};
\draw[->] (muc) -- node[lab, left, pos=0.45]{$\sigma_0^2$} (g1); \draw[->] (muc) -- (g2);
\node[var, fill=orangeP!45] (x1) at (0.0,-2.75) {$x_i$}; \node[var, draw=blueG, thick] (x2) at (1.0,-2.75) {$x_j$}; \node[var, draw=blueG, thick] (x3) at (2.0,-2.75) {$x_k$};
\node[var] (x4) at (3.3,-2.75) {$x_l$};
\foreach \x in {x1,x2,x3} \draw[->] (g1) -- (\x); \draw[->] (g2) -- (x4);
\node[lab] at (0.25,-1.95) {$\omega^2$};
\node[rectangle, fill=orangeP, minimum size=0.17cm, inner sep=0pt] (f1) at (0.0,-3.55) {}; \draw (x1) -- (f1);
\node[lab, anchor=west, text=orangeP!70!black] at ($(f1.east)+(0.08,0)$) {pull + F/T};
\begin{scope}[on background layer]
\node[draw=blueG, dashed, rounded corners=4pt, fill=blueG!6, fit=(g1)(x1)(x3)(f1), inner sep=4pt] (grp) {};
\end{scope}
\node[lab, align=left, anchor=north west, text=blueG] (gl) at ($(grp.south west)+(0,-0.03)$) {look-alike group $g$: same type,\\ fronts $|\hat n_i\!\cdot\!\hat n_j| > \cos\epsilon_a$, $\|\mathbf s_i - \mathbf s_j\| < d_0$};
\node[align=left, anchor=north west, font=\scriptsize] (eqb) at ($(gl.south west)+(0,-0.05)$) {pull of $i$: \ $\lambda_g \mathrel{+}= \omega^{-2}$, \ $\eta_g \mathrel{+}= x_i\,\omega^{-2}$\\[2pt] untouched $j$: \ $x_j \mid \mathcal H_t \sim \mathcal N\big(\tfrac{\eta_g}{\lambda_g},\, \lambda_g^{-1} {+} \omega^2\big)$};
\node[draw=greenO!65!black, rounded corners=5pt, line width=1.1pt, fit=(muc)(x1)(x4)(eqb)(gl), inner sep=5pt] (boxb) {};
\node[ttl, fill=white, inner sep=2pt] at (boxb.north) {Look-alike prior (M4)};
\end{scope}
\node[font=\small] at (boxb.south |- capa) {(b)};
\end{tikzpicture}}
\caption{Architecture of PhysVLMM on one Hoi! scene: four look-alike drawers (blue), one pulled (orange). (a) Modules M1--M6, from sensors to the answer. (b) Look-alike prior of M4: a pull updates the group effort, and every untouched member receives the predictive effort~\eqref{eq:pred}.}\label{fig:arch}
\end{figure*}

\section{Methodology}\label{sec:method}
\subsection{Problem formulation}\label{sec:problem}
\begin{figure*}[t]\centering
\includegraphics[width=\linewidth]{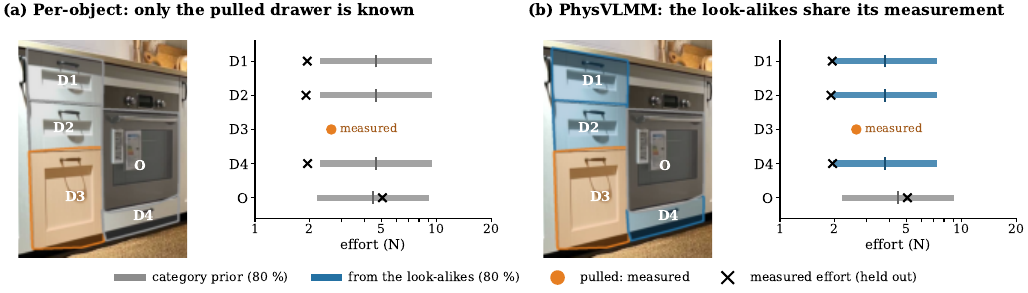}
\caption{Predicted effort after one measured pull (bars: 80\,\% interval; $\times$: measured effort, held out). (a) Per-object: only the pulled drawer D3 is known. (b) PhysVLMM: the pull also informs its look-alikes (blue); the oven O, in another group, keeps the prior.}\label{fig:concept}
\end{figure*}
\paragraph{Scene map.} The robot maintains a Vision-Language-Motion Map~\cite{ghosh2026vlmm} whose elements are
\begin{equation}
e_i = (\mathbf p_i, \mathbf f_i, o_i, \kappa_i, \rho_i, y_i, c_i),
\label{eq:vlmm}
\end{equation}
with position $\mathbf p_i$, semantic feature $\mathbf f_i$, observed-motion score $o_i$ and its reliability $\kappa_i$, movability prior $\rho_i$, and fused motion class $y_i$ with confidence $c_i$. PhysVLMM keeps this seven-tuple unchanged and extends every articulated element to
\begin{align}
e_i^{P} &= \big(e_i,\; \mathcal G_i,\; \mathcal B_{i,t}(m_i, \theta_i, \mathbf q_{i,t}),\; \mathcal R_{i,t}\big), \label{eq:element}\\
\mathcal M_t^{P} &= (e_1^{P}, \dots, e_n^{P}), \nonumber
\end{align}
where $\mathcal G_i$ is the articulation geometry, $\mathcal B_{i,t}$ the physical belief over mechanism $m_i$, parameters $\theta_i$ and joint state $\mathbf q_{i,t} = (q, \dot q)$, and $\mathcal R_{i,t}$ the response attribute defined below.

\paragraph{Part model.} A part is a rigid link $D$ attached to a fixed body by a joint of type $m_i \in \{\text{fixed}, \text{revolute}, \text{prismatic}\}$ with unit axis $\hat a_i$ through the point $\mathbf a_i$. With the joint twist $\xi_i$ and the joint coordinate $q$ (angle or travel, $q = 0$ when closed), the pose of the link and of each of its points $\mathbf x_D$ in the world frame $W$ are
\begin{equation}
T_{WD}(q) = T_{WH}\,\exp(\hat\xi_i\, q)\,T_{HD}(0), \quad \mathbf x_W(q) = T_{WD}(q)\,\mathbf x_D,
\label{eq:poe}
\end{equation}
where $H$ is a frame on the joint axis (\cref{fig:door}). The geometry $\mathcal G_i = (m_i, \hat a_i, \mathbf a_i, \mathcal P_i, [q^-_i, q^+_i])$ also contains the point set $\mathcal P_i$ of the moving link, which is needed for contact, and the joint limits. This model covers any part that rotates about one hinge or slides along one rail, such as doors, drawers, lids, windows and valves; free objects, deformable objects and multi-joint or latched mechanisms are outside it.

The motion of the joint obeys
\begin{equation}
I\ddot q + b\dot q + \tau_c\,\mathrm{sgn}(\dot q) + k q + \tau_0 = J(q)^{\!\top} w + \tau_{\mathrm{ext}} + \tau_{\mathrm{con}},
\label{eq:joint}
\end{equation}
with generalised inertia $I$, viscous damping $b$, Coulomb friction $\tau_c$, and closer stiffness and preload $k$, $\tau_0$; $\theta_i = (I, b, \tau_c, k, \tau_0)$. The applied wrench $w = (F, M)$ at the handle enters as the generalised force $\tau_a = J(q)^{\!\top} w$. For a revolute joint this is $((\mathbf r - \mathbf a_i)\times F + M)\cdot\hat a_i$, with $\mathbf r$ the contact point; for a prismatic joint it is $F\cdot\hat a_i$. In the experiments only the force is used ($M$ is not), with $\mathbf r$ the gripper position. The term $\tau_{\mathrm{ext}}$ collects forces the robot does not measure, such as a person's hand. The contact reactions $\tau_{\mathrm{con}}$ of obstacles and joint stops follow a complementarity condition (Appendix~\ref{sec:s-method}). Without contact, \eqref{eq:joint} is linear in $\theta_i$:
\begin{equation}
\varphi(t)^{\!\top}\theta_i = \tau_a(t) + \tau_{\mathrm{ext}}(t), \quad \varphi = \big(\ddot q, \dot q, \mathrm{sgn}\,\dot q, q, 1\big)^{\!\top}\!.
\label{eq:linear}
\end{equation}

The running force $\bar F_i$ of a part is the handle force needed to keep it moving. In steady opening motion ($\ddot q = 0$, $\dot q = \bar v > 0$) at mid-stroke $\bar q$, with the pull perpendicular to the lever arm and $\tau_{\mathrm{ext}} = 0$, \eqref{eq:joint} gives $\bar F_i = (b\bar v + \tau_c + k\bar q + \tau_0)/r_i$, with $r_i$ the handle radius of a revolute joint and $r_i = 1$ for a prismatic one. Its logarithm $x_i = \log \bar F_i$ is called the effort (measured as in Appendix~\ref{sec:s-proc}).

\paragraph{Belief and response.} The belief is the posterior given all causally available evidence $\mathcal H_t$, \ie observed motion of the part and measured interactions (wrench and motion recorded during a pull):
\begin{equation}
\mathcal B_{i,t}(m, \theta, \mathbf q_t) = p(m, \theta, \mathbf q_t \mid \mathcal H_t).
\label{eq:belief}
\end{equation}
An unmeasured external force is a latent input that is marginalised in the transition. With a zero-mean Gaussian prior that is white in time, the marginal is the noise term of the fit to a measured pull, whose variance is estimated from the residuals (Appendix~\ref{sec:s-method}); predictions of future motion assume no unmeasured force.

For a proposed action $u$ (an applied wrench profile at a contact point over a duration), the response of part $i$ is the joint predictive distribution of the joint motion $\Delta q$ and the contact and collision events $\mathcal C$,
\begin{equation}
\mathcal R_{i,t}(u) = \sum_m \!\int p(\Delta q, \mathcal C \mid u, m, \theta, \mathbf q_t)\, \mathcal B_{i,t}(m, \theta, \mathbf q_t)\, d\theta\, d\mathbf q_t,
\label{eq:response}
\end{equation}
which marginalises the mechanism, parameter and state uncertainty of the belief.

\paragraph{Problem statement.} Given $\mathcal M_t^{P}$ and the evidence $\mathcal H_t$, estimate $\mathcal G_i$, $\mathcal B_{i,t}$ and $\mathcal R_{i,t}$ for every articulated part, including parts that have never been touched, and answer the queries of a pull planner:
\begin{itemize}
\item \emph{Is the part movable?} and \emph{Has it moved?}, from $y_i$;
\item \emph{Which way does it open?}, from $\mathcal G_i$;
\item \emph{Will action $u$ achieve the task safely?}, as the risk test
\begin{equation}
P\big(\text{success}_q \wedge \neg\text{collision} \,\big|\, u, \mathcal H_t\big) \;\ge\; 1 - \delta,
\label{eq:risk}
\end{equation}
\end{itemize}
where $\text{success}_q$ is a task predicate, such as ``the part keeps moving'' or ``the part reaches opening $q^\ast$''. Each answer is ``yes'', ``no'' or ``insufficient evidence''. The collision term is decided by sweeping the link's points $\mathcal P_i$ through the scan (swept volume). With collision ruled out this way, the predicate ``a pull of $F$ newtons keeps the part moving at mid-stroke'' reduces~\eqref{eq:risk} to $P(\bar F_i \le F \mid \mathcal H_t) \ge 1 - \delta$. This is the query evaluated here; it does not cover the higher breakaway force at the start of a pull.

\subsection{Identifiability and design}\label{sec:design}
Two propositions, proved in Appendix~\ref{sec:s-method}, state what the evidence can identify.
\begin{proposition}[Passive observation does not identify dynamics]\label{prop:passive}
Let the generalised force $\tau_a + \tau_{\mathrm{ext}}$ be unmeasured and range over a class that contains all piecewise-continuous functions, let there be no contact, and let $q(t)$ be any twice continuously differentiable trajectory with $\dot q \neq 0$ almost everywhere and finitely many sign changes of $\dot q$ on every bounded interval. Then every $\theta$ is consistent with $q$. If the force has a prior, the likelihood of $\theta$ is this prior evaluated at $\varphi^{\!\top}\theta$ along the trajectory (with a $\theta$-dependent Jacobian), so everything learnt about $\theta$ comes from the assumed distribution of the unmeasured force.
\end{proposition}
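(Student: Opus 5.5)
The plan is a direct construction. Given any $\theta = (I, b, \tau_c, k, \tau_0)$ and the trajectory $q(t)$, define the candidate unmeasured generalised force by solving the contact-free joint equation~\eqref{eq:joint} for its right-hand side:
\begin{equation*}
u_\theta(t) := \varphi(t)^{\!\top}\theta = I\ddot q(t) + b\dot q(t) + \tau_c\,\mathrm{sgn}(\dot q(t)) + k q(t) + \tau_0 .
\end{equation*}
By the linear form~\eqref{eq:linear}, $q$ together with the input $\tau_a + \tau_{\mathrm{ext}} = u_\theta$ satisfies the dynamics exactly. So the first claim reduces to showing that $u_\theta$ lies in the admissible class, i.e.\ that it is piecewise continuous.

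\textbf{Piecewise continuity.} Since $q \in C^2$, the terms $I\ddot q$, $b\dot q$ and $kq$ are continuous, and $\tau_0$ is constant. The only problematic term is $\tau_c\,\mathrm{sgn}(\dot q)$. This is where the hypotheses are used, and I expect it to be the main (if mild) obstacle. On every bounded interval, $\dot q$ changes sign finitely many times, and $\dot q \neq 0$ almost everywhere. Between consecutive sign changes, $\dot q$ has one sign except on a null set. Such a set may still have isolated or accumulating zeros where $\dot q$ touches zero without changing sign.

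To handle these zeros, I would use the convention that the value of $\mathrm{sgn}$ on a null set does not affect the trajectory. More precisely, I would redefine $u_\theta$ on the zero set of $\dot q$ by the constant sign of the surrounding interval. The equation then holds almost everywhere, and since $I\ddot q$ is continuous it holds in the integrated (Carathéodory) sense that defines solutions with discontinuous inputs. The modified $u_\theta$ has jumps only at the finitely many sign changes in each bounded interval, so it is piecewise continuous. Because $\theta$ was arbitrary, every $\theta$ is consistent with $q$.

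\textbf{Likelihood statement.} Suppose the unmeasured force $u(\cdot)$ has a prior density $p_u$; in the discretised setting of the paper, $u$ is sampled at times $t_1,\dots,t_N$ and has a white Gaussian prior. Given $\theta$ and an initial state, the map $u \mapsto q$ is a change of variables, and its inverse is $q \mapsto \varphi^{\!\top}\theta$ along the trajectory. The change-of-variables formula then gives
\begin{equation*}
p(q \mid \theta) = p_u\big(\varphi(t_1)^{\!\top}\theta, \dots, \varphi(t_N)^{\!\top}\theta\big)\,\big|\det \partial u/\partial q\big|.
\end{equation*}
The Jacobian is the derivative of the discretised operator $I\ddot q + b\dot q + kq$ with respect to the sampled $q$, so it depends on $\theta$ (for instance through $I$), which explains the parenthetical in the statement. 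Apart from this deterministic Jacobian, $\theta$ enters only through $p_u$. Therefore any preference of the posterior among values of $\theta$ is produced by the assumed prior on the unmeasured force, and not by the observed motion itself. Since the first part shows that the data fit every $\theta$ perfectly, this completes the argument.
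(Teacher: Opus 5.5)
Your proposal is correct and follows the paper's proof. Like the paper, you take the force to be $\varphi(t)^{\top}\theta$, check that it is piecewise continuous with jumps only where $\dot q$ changes sign, and obtain the likelihood by a change of variables from force to trajectory with a $\theta$-dependent Jacobian. Your extra step is a careful refinement the paper leaves implicit: with $\mathrm{sgn}(0)=0$ and accumulating zeros of $\dot q$ that do not change sign, the unmodified force would not be piecewise continuous, so you redefine the friction term on that null set and let the equation hold almost everywhere, in the Carath\'eodory sense.
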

A numerical identifiability check is also given there. Observed motion therefore informs $\mathcal G_i$ but not $\theta_i$.
\begin{proposition}[Measured interaction identifies the pulled part only]\label{prop:measured}
Let $\tau_a$ be measured, $\tau_{\mathrm{ext}} = \tau_{\mathrm{con}} = 0$, and $\Phi = [\varphi(t_1), \dots, \varphi(t_N)]^{\!\top}$. Then $\theta_i$ is identifiable if and only if $\Phi$ has full column rank. In particular:
\begin{itemize}
\item for a constant applied force $\tau_a$, $\Phi(\theta_i - \tau_a e_5) = 0$ with $e_5$ the fifth unit vector, so $\Phi$ is rank-deficient (unless $\theta_i = \tau_a e_5$) and $(I, b, \tau_c, k, \tau_0 - \tau_a)$ is identifiable at best up to a common scale;
\item for motion in one direction only, $\tau_c$ cannot be separated from $\tau_0$; only their sum (opening) or their difference (closing) is identified.
\end{itemize}
The measurements of part $i$ carry no information about $\theta_j$, $j \neq i$, unless $\theta_i$ and $\theta_j$ are statistically dependent.
\end{proposition}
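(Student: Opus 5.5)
The argument rests on the linear regression \eqref{eq:linear}. With $\tau_a$ measured and $\tau_{\mathrm{ext}}=\tau_{\mathrm{con}}=0$, the samples give the linear system $\Phi\theta_i = \boldsymbol\tau_a$, with $\boldsymbol\tau_a=(\tau_a(t_1),\dots,\tau_a(t_N))^{\!\top}$. Identifiability means that the map $\theta\mapsto\Phi\theta$ is injective on the parameter space. First I will show the equivalence: if $\Phi$ has full column rank, the unique solution is $\theta_i=(\Phi^{\!\top}\Phi)^{-1}\Phi^{\!\top}\boldsymbol\tau_a$. Conversely, if some $v\neq 0$ lies in $\ker\Phi$, then $\theta_i$ and $\theta_i+sv$ produce identical data for every $s$. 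This extends to the noisy case, since the likelihood depends on $\theta$ only through $\Phi\theta$. Because the parameter space is an open subset of $\mathbb R^5$, small $s$ remain admissible.

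For the first bullet, let $\tau_a\equiv c$. The fifth regressor is constant and equal to $1$, so $\Phi e_5=\mathbf 1$. Subtracting this column gives $\Phi(\theta_i-ce_5)=\boldsymbol\tau_a-c\mathbf 1=0$. Thus $\theta_i-ce_5$ is a nonzero kernel vector unless $\theta_i=ce_5$, and $\Phi$ is rank-deficient. The data then fix only the kernel direction, that is, $(I,b,\tau_c,k,\tau_0-c)$ up to a common scale. I will also note that the scale is recovered only if $c$ is known, and here $c$ is known because it is measured. The phrase ``at best'' covers the possibility that the kernel has dimension greater than one.

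For the second bullet, suppose $\dot q>0$ throughout. Then $\mathrm{sgn}\,\dot q\equiv 1$, so the third and fifth columns of $\Phi$ coincide. The vector $e_3-e_5$ lies in the kernel, and only $\tau_c+\tau_0$ enters the data. If instead $\dot q<0$ throughout, the third column is $-1$, so $e_3+e_5$ lies in the kernel and only the difference $\tau_0-\tau_c$ is identified.

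For the last claim, I will write the joint likelihood as $p(\mathcal D_i\mid\theta_i)$, where the data of part $i$ depend only on $\theta_i$ through \eqref{eq:joint}. The posterior over $\theta_j$ is then
\[
p(\theta_j\mid\mathcal D_i)\propto\int p(\mathcal D_i\mid\theta_i)\,p(\theta_i,\theta_j)\,d\theta_i .
\]
If $p(\theta_i,\theta_j)=p(\theta_i)p(\theta_j)$, this integral factors and leaves $p(\theta_j)$ unchanged. Dependence, for example through a shared group mean, is therefore necessary for any information to pass to $\theta_j$. I expect no step to be a real obstacle. The only subtlety is making ``identifiable'' precise, as injectivity of $\theta\mapsto\Phi\theta$ or equivalently non-flat likelihood directions, so that the if-and-only-if holds in the noisy setting as well.
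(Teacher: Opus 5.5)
Your proof is correct and follows the paper's own argument. Both rest on four points: $\Phi\theta_i=\boldsymbol\tau_a$ has a unique solution iff $\Phi$ has full column rank; $\Phi e_5=\mathbf 1$ under a constant force; the third column equals $\pm$ the last under one-way motion; and the likelihood depends on $\theta_i$ alone. You only spell out more explicitly the kernel vectors $e_3\mp e_5$ and the factorisation of the posterior of $\theta_j$ under independence.

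One remark should be removed, though. The measured $c$ does \emph{not} recover the scale. For every $s$, $\Phi\big(ce_5+s(\theta_i-ce_5)\big)=c\mathbf 1$, so the whole line is consistent with the data whether or not $c$ is known. Equivalently, $I\ddot q+b\dot q+\tau_c\,\mathrm{sgn}\,\dot q+kq+(\tau_0-c)=0$ is homogeneous in $(I,b,\tau_c,k,\tau_0-c)$. Read as a claim that the scale is fixed, that sentence contradicts the rank deficiency you have just shown. Only a force that is not constant, as in the ramp and two-level rows of \cref{tab:fisher}, removes that direction.

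Your ``open subset of $\mathbb R^5$'' step is an assumption; the paper makes the same one implicitly by treating $\theta$ as unconstrained. Under the physical bounds $b,\tau_c\ge 0$ the parameter set is not open, but this does not affect the statement as posed.
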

A single human pull moves a part in one direction, so by Proposition~\ref{prop:measured} it leaves an unidentifiable direction in $\theta_i$. A point estimate must pick an arbitrary value along it; a belief retains it. The opening effort depends on $\tau_c$ and $\tau_0$ only through their sum, so this direction does not change it: one opening pull fixes the opening effort of the pulled part even though $\theta_i$ stays unknown. Together, the two propositions leave one route to the effort of an untouched part: a statistical dependence on the parts that have been pulled. PhysVLMM builds this dependence from the map by grouping look-alike parts, \ie parts of one type whose fronts are parallel and close together, which are usually of one design and installed together. Type and front come from the part's description and scan points, so a part can be grouped before it has moved. The two propositions fix the three steps of PhysVLMM: keep the direction a pull leaves undecided, share evidence among look-alikes, and answer only at a stated risk. The proposed approach is depicted in \cref{fig:arch}, and \cref{fig:concept} shows how one measured pull reaches the look-alikes of the pulled part.

\begin{algorithm}[t]
\caption{PhysVLMM map update}\label{alg:update}
\begin{algorithmic}[1]
\Require VLMM elements $e_i$ (M1); category parameters $(\mu_c, \sigma_0^2, \omega^2)$ and the classifier from other buildings
\Procedure{ObserveMotion}{$i$, link poses $(\mathbf c_t, R_t)$} \Comment{M2, M3}
  \State fit a straight path and a rigid rotation; BIC of each
  \State $P(m_i{=}\text{rev}) \gets$ logistic classifier on the two fits; axis $\hat a_i, \mathbf a_i$ from the arc fit (revolute) or the path direction (prismatic)
\EndProcedure
\Procedure{Group}{} \Comment{M4}
  \State $(\hat n_i, \mathbf s_i) \gets$ normal and centre of the dominant plane of $\mathcal P_i$; type from the description
  \State $A_{ij} \gets$ same type $\wedge\ |\hat n_i \!\cdot\! \hat n_j| > \cos\epsilon_a \wedge \|\mathbf s_i - \mathbf s_j\| < d_0$
  \State $g(\cdot) \gets$ connected components of $A$ (single linkage); $(\lambda_g, \eta_g) \gets (1/\sigma_0^2, \mu_c/\sigma_0^2)$
\EndProcedure
\Procedure{MeasuredPull}{$i$, force $f(t)$, gripper motion and grip} \Comment{M4}
  \State $x_i \gets$ mean over the pulls of $i$ of $\log$ (median $\|f(t)\|$ while grasped and moving)
  \State $\lambda_{g(i)} \gets \lambda_{g(i)} + 1/\omega^2$; \ $\eta_{g(i)} \gets \eta_{g(i)} + x_i/\omega^2$ \Comment{once per part; a repeat pull replaces $x_i$}
  \State Gaussian prior $(\mathbf m_0, V_0)$ on $\theta_i$: category, shrunk to the measured look-alikes
  \State $(\mathbf m, V) \gets$ posterior~\eqref{eq:thetapost}
\EndProcedure
\end{algorithmic}
\end{algorithm}
\begin{algorithm}[t]
\caption{PhysVLMM query}\label{alg:query}
\begin{algorithmic}[1]
\Require map $\mathcal M_t^{P}$ maintained by \cref{alg:update}; part $j$; action $u$ (handle force $F$); target opening $q^\ast$; risk level $\delta$
\If{query is ``movable?'' or ``moved?''} \Return $y_j$ \Comment{M1}
\EndIf
\If{query is ``which way?''} \Return $\arg\max_m P(m_j)$, $\hat a_j$, $\mathbf a_j$ \Comment{M3}
\EndIf
\If{part $j$ has been pulled} $p_{\text{succ}} \gets \mathbf 1[x_j \le \log F]$ \Comment{measured effort}
\Else\ $p_{\text{succ}} \gets \Phi_{\mathcal N}\big((\log F - \hat x_j)/\sqrt{v_j}\big)$, with $\hat x_j = \eta_{g(j)}/\lambda_{g(j)}$, $v_j = 1/\lambda_{g(j)} + \omega^2$ \Comment{\eqref{eq:pred}}
\EndIf
\If{part $j$ has been pulled} \Comment{M5}
  \State sample $\theta^{(s)} \sim \mathcal N(\mathbf m, V)$; roll out $q^{(s)}(t)$ under $u$ $\to$ motion samples of $\mathcal R_{j,t}$, returned with the answer
\EndIf
\State sweep $\mathcal P_j(q)$ through the scan; $q_{\mathrm{free}} \gets$ first $q$ at which a swept point comes within 1\,cm of the scan \Comment{M5}
\State $p \gets p_{\text{succ}} \cdot \mathbf 1[q_{\mathrm{free}} \ge q^\ast]$ \Comment{\eqref{eq:risk}}
\If{$p \ge 1 - \delta$} \Return yes \ElsIf{$p \le \delta$} \Return no \Else\ \Return insufficient evidence \Comment{M6}
\EndIf
\end{algorithmic}
\end{algorithm}

\subsection{Algorithm}\label{sec:algo}
\paragraph{Kinematic belief (M3).} In \cref{alg:update}, a straight path and a rigid rotation are fitted to the observed link poses; a classifier on the two fits gives $P(m_i = \text{rev})$, and the better fit the axis.

\paragraph{Posterior of a pulled part (M4).} A measured pull gives the regressors $\Phi$ and the generalised forces $\boldsymbol\tau_a$ of~\eqref{eq:linear}. With a Gaussian prior $(\mathbf m_0, V_0)$, the category prior shrunk to the measured look-alikes, the posterior is
\begin{equation}
V = \big(V_0^{-1} + \Phi^{\!\top}\Phi/\sigma^2_{\mathrm{eff}}\big)^{-1}, \quad \mathbf m = V\big(V_0^{-1}\mathbf m_0 + \Phi^{\!\top}\boldsymbol\tau_a/\sigma^2_{\mathrm{eff}}\big),
\label{eq:thetapost}
\end{equation}
with $\sigma^2_{\mathrm{eff}}$ the inflated residual variance. Along the direction left unidentified, $\Phi^{\!\top}\Phi$ is (nearly) singular, so the posterior keeps the prior's spread there. The response~\eqref{eq:response} averages rollouts of~\eqref{eq:joint} over posterior samples.

\paragraph{Evidence sharing (M4).} The efforts of look-alike parts share a group effort,
\begin{equation}
\mu_g \sim \mathcal N(\mu_c, \sigma_0^2), \qquad x_i \mid \mu_{g(i)} \sim \mathcal N(\mu_{g(i)}, \omega^2),
\label{eq:model}
\end{equation}
with category mean $\mu_c$, between-group spread $\sigma_0^2$ and within-group spread $\omega^2$~\cite{gelman2013bayesian}, from other buildings ($\sigma_0^2$, $\omega^2$ by marginal likelihood over a grid). A group's natural parameters $(\lambda_g, \eta_g)$ start at $(1/\sigma_0^2, \mu_c/\sigma_0^2)$, a measured part $i$ adds $(1/\omega^2, x_i/\omega^2)$, and an untouched part $j$ has the predictive effort
\begin{equation}
x_j \mid \mathcal H_t \sim \mathcal N\big(\eta_{g(j)}/\lambda_{g(j)},\; \lambda_{g(j)}^{-1} + \omega^2\big).
\label{eq:pred}
\end{equation}
One pull thus updates its whole group with two additions.

\paragraph{Risk-bounded query (M5, M6).} In \cref{alg:query}, the success probability of a force $F$ on part $j$ comes from $x_j$ if $j$ was pulled and from~\eqref{eq:pred} otherwise; the collision term sweeps $\mathcal P_j(q)$ through the scan. The answer to~\eqref{eq:risk} is ``yes'' if its probability is at least $1 - \delta$, ``no'' if at most $\delta$, and ``insufficient evidence'' otherwise. If the predictive distribution is calibrated, each ``yes'' is false-safe with probability at most $\delta$, and so is the expected false-safe rate of the ``yes'' answers.

\paragraph{Cost.} A pull costs one bounded least-squares fit, one $5 \times 5$ solve and two additions; a query costs one normal CDF and one k-d-tree search per swept point; a response integrates the 1-D joint equation per posterior sample. Nothing is optimised through a simulator, so a case takes milliseconds (\cref{sec:published}); fitting details are in Appendix~\ref{sec:s-method}.

\section{Results and discussions}\label{sec:results}
To evaluate the performance of PhysVLMM, the proposed approach was implemented in Python 3.10 with NumPy and SciPy, and all experiments were run on one laptop (AMD Ryzen 7 7840HS, 8 cores, 32\,GB RAM, NVIDIA RTX 4060 Laptop GPU with 8\,GB); the physical belief layer (M2--M6) runs on the CPU only. Two data sources are used: real doors and drawers of the Hoi! dataset~\cite{engelbracht2026hoi}, and simulated RoboCasa kitchens~\cite{nasiriany2024robocasa} in ManiSkill~3~\cite{tao2025maniskill3}, which add fixed robot pulls and collision ground truth (\cref{sec:setup}). PhysVLMM is compared on five quantities: (i) the joint type and axis, with FlowBot3D~\cite{eisner2022flowbot3d} and GAPartNet~\cite{geng2023gapartnet}; (ii) the response of a part to a measured pull, with RigPI~\cite{he2026rigpi}; (iii) the force needed to move a part that has not been touched and (iv) the answer to ``will this pull keep it moving?'', with ablations, since no published method predicts these for an untouched part; and (v) collision along the opening, with a bounding-box test. Published methods are run with their official code; the adapters only convert the inputs (Appendix~\ref{sec:s-baselines}). The results below follow this order; \cref{tab:published} reports simulation first and real data second.
\subsection{Experimental setup}\label{sec:setup}
\paragraph{Simulated data.} In \simKitchens{} RoboCasa kitchens, a robot pulls every door and drawer from the closed state with a fixed force of 3--12\,N for 1\,s, so most openings stop short (median \simOpen\,\% of the stroke). The physics of each part is drawn from a group model fitted on Hoi!, poses and forces carry the noise measured on Hoi!, and collisions with other fixtures are known (Appendix~\ref{sec:s-sim}).

\paragraph{Real data.} The real experiments use the Hoi! dataset~\cite{engelbracht2026hoi}, in which people opened and closed real doors and drawers in furnished homes with a handheld gripper. The gripper carries a 6-axis force/torque sensor (100\,Hz), an Aria headset whose SLAM gives the gripper trajectory, and a trigger that measures the grip. Laser scans of every home provide each part's joint type and axis, its 3D points and a short description. Of the 20 released homes, \nHomes{} contain force recordings. Two development homes only set the inputs of the compared methods and the collision tolerance; joint type and collision are scored on the other \fbHomes{} homes, force and query on the \forceHomes{} with look-alike groups, final position on all \tTwoHomes. Each recorded window also contains the walk to and from the handle, so only the grasped phase is used (\cref{fig:trace}). No new data were collected from people.

\begin{figure}[t]\centering
\includegraphics[width=\linewidth]{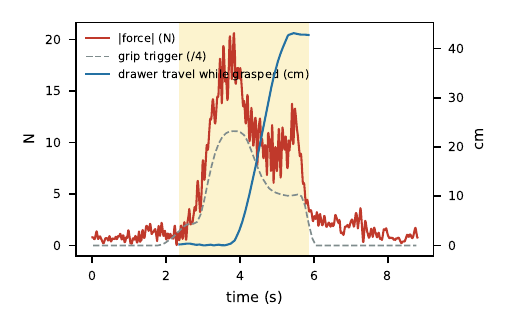}
\caption{One recorded pull (wardrobe drawer); shaded: grasped phase.}\label{fig:trace}
\end{figure}

\paragraph{Protocol.}
\begin{itemize}
\item \emph{Oracle perception.} Part identities, descriptions and 3D points come from the scan annotations, for PhysVLMM and for every compared method, so that physical inference is evaluated separately from perception. Every part in the data moves, so the \emph{fixed} joint type is not evaluated.
\item \emph{No leakage.} The joint annotations are used only for scoring and for one dataset check (scan instances whose index points to an annotated part of another class are excluded for every method). Window selection, look-alike groups and the motion score use only the gripper path, the part descriptions and the scan points. Every learned quantity (calibration, category priors, group spreads) is fitted on homes other than the one evaluated.
\item \emph{Known-answer tests.} Every component was first checked on synthetic cases with known answers (Appendix~\ref{sec:s-tests}); the tests found three errors, all fixed before these results.
\end{itemize}

\paragraph{Metrics.} Joint type: accuracy. Axis: median angle error. Force of an untouched part: typical error $\exp(\overline{|\Delta \log F|}) - 1$. Query ``will $F$\,N keep it moving?'' ($F \in \{3,5,8\}$\,N, $\delta = 0.1$): Brier score; the effort and collision tests of~\eqref{eq:risk} are scored separately; the combined test and the pulled-part branch of \cref{alg:query} are not evaluated, and the collision term is a deterministic check. Final position: CRPS of the final joint position of a held-out pull, as a fraction of the stroke (whole-trajectory error in \cref{tab:t2}). Differences are paired, with 95\,\% confidence intervals from bootstrapping homes.

\subsection{Comparison with the state of the art}\label{sec:published}
\begin{table}[t]\centering\small\setlength{\tabcolsep}{3pt}
\caption{Comparison with published methods (official code), same cases per block. Joint type: \% correct; axis, hinge: median error; response: CRPS of the final position (RigPI minus PhysVLMM, 95\,\% CI: +0.249 [+0.239, +0.261] in simulation, +0.221 [+0.126, +0.312] on real data); time: median per case on one thread of the same CPU, measured on real data. --: not predicted. $^{a}$Answers 42\,\% / 45\,\% of the parts / pulls. $^{b}$On its 106 drawer pulls (PhysVLMM: 2.8$^\circ$). $^{c}$GPU only. $^{d}$Joint, physical belief and response. $^{*}$Code not released; $^{\ddagger}$released code lacks the articulation stage; $^{\dagger}$no joint response.}\label{tab:published}
\resizebox{\linewidth}{!}{\begin{tabular}{lccccc}\toprule
 & joint type & axis, drawer / & hinge & response & time \\
method & $\uparrow$ & door ($^\circ$) $\downarrow$ & (cm) $\downarrow$ & $\downarrow$ & $\downarrow$ \\\midrule
\multicolumn{6}{l}{\emph{Simulation (120 RoboCasa kitchens, 4327 parts, 12320 pulls)}} \\
FlowBot3D~\cite{eisner2022flowbot3d} & 66\,\% & 15.4 / 33.8 & 14.7 & -- & \\
GAPartNet~\cite{geng2023gapartnet} & 35\,\%$^{a}$ & 70.6 / -- & -- & -- & \\
RigPI~\cite{he2026rigpi} & -- & -- & -- & 0.310 & \\
\textbf{PhysVLMM} & \textbf{99\,\%} & \textbf{0.5 / 0.6} & \textbf{0.2} & \textbf{0.062} & \\\midrule
\multicolumn{6}{l}{\emph{Real data (Hoi!, 249 pulls; response: 200 pulls)}} \\
FlowBot3D~\cite{eisner2022flowbot3d} & 48\,\% & 25 / 35 & 11.6 & -- & 42\,ms \\
GAPartNet~\cite{geng2023gapartnet} & 17\,\%$^{a}$ & 16$^{b}$ / -- & -- & -- & 54\,ms$^{c}$ \\
RigPI~\cite{he2026rigpi} & -- & -- & -- & 0.544 & 230\,s \\
\textbf{PhysVLMM} & \textbf{86\,\%} & \textbf{2.7 / 4.0} & \textbf{2.7} & \textbf{0.323} & \textbf{13\,ms}$^{d}$ \\\midrule
ForceTwin~\cite{engelbracht2026forcetwin}$^{*}$, IAAO~\cite{zhang2025iaao}$^{\ddagger}$ & \multicolumn{5}{c}{not run} \\
PhysGS~\cite{chopra2026physgs}, NGFF~\cite{li2026ngff} & \multicolumn{5}{c}{not comparable$^{\dagger}$} \\
\bottomrule\end{tabular}}\end{table}

\begin{figure*}[t]\centering
\includegraphics[width=\linewidth]{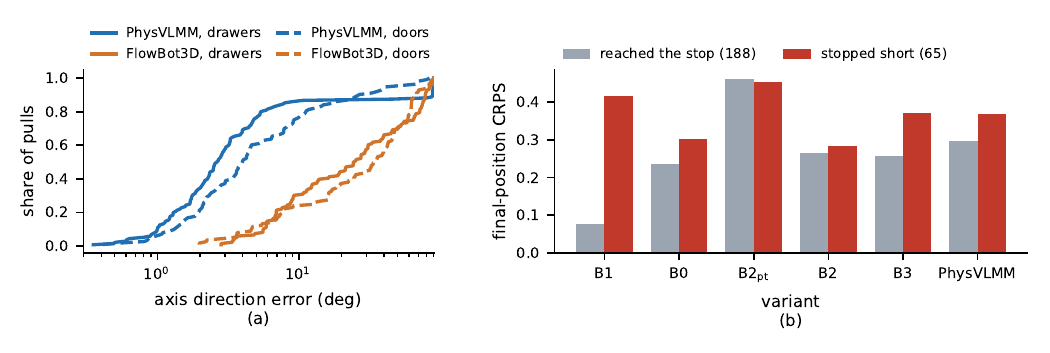}
\caption{(a) Axis error on the held-out homes, PhysVLMM vs.\ FlowBot3D. (b) Final-position error, split by whether the pull reached the stop.}\label{fig:results}
\end{figure*}
\Cref{tab:published} compares PhysVLMM with the published methods on the same cases.
\begin{itemize}
\item \emph{Joint type and axis.} FlowBot3D and GAPartNet infer the joint from one static scan and were trained on simulated objects, whereas PhysVLMM observes the link moving. FlowBot3D is less accurate than always answering ``drawer'' (\fbMajority\,\%), and \cref{fig:results}a shows the distribution of its axis errors over the held-out pulls; the input orientation it expects was chosen on the development homes. GAPartNet labels most real parts as no actionable part and is right on \gpAccAns\,\% of the pulls it answers; its drawer error comes from \gpDrawerN{} drawer pulls, and its released code gives no rule for a door's hinge axis.
\item \emph{Response.} RigPI fits the mass, inertia and Coulomb friction of the link to the training pull by differentiable simulation. Its model has no damping, spring or preload, which door closers and drawer runners have. Over the whole trajectory, its error is similar to that of PhysVLMM (\rigRmse{} vs.\ \rigOursRmse); the gap is in where the part ends. Its vision-language prior is not released, so the prior was computed from the size of the scanned panel. Cases where RigPI returned no prediction (\rigFail{} of \rigCases, a non-finite gradient) or the training pull is too short for the fit of PhysVLMM (\rigNoFit) are excluded for all methods.
\item \emph{Simulation vs.\ real data.} PhysVLMM is best on every quantity in both blocks; in simulation the joint is estimated from a controlled 20\,N pull, and the drawer direction of GAPartNet is further off than a random one (median 60$^\circ$).
\end{itemize}

\paragraph{Run time.} \Cref{tab:published} gives the time per case on one thread of the laptop CPU; RigPI is timed at the 2\,ms step of its runs (\timeRigN{} cases). Within a PhysVLMM case, the mechanism and axis take \timeOursJoint\,ms, the physical belief \timeOursDyn\,ms and the response \timeOursResp\,ms per second of action (50 rollouts); a query takes \timeOursQuery\,ms and a collision check \timeOursColl\,ms per part.

\subsection{Ablation study}\label{sec:ablation}
\begin{table*}[t]\centering\small\setlength{\tabcolsep}{4pt}
\caption{Ablations (same inputs). --: cannot answer. Force, query: 140 untouched parts (17 homes); final position: CRPS of the final joint position of 253 held-out pulls (18 homes). $\Delta$: variant minus PhysVLMM (positive: PhysVLMM better). Control: mean of the home's measured parts of the type.}\label{tab:ablation}
\resizebox{\textwidth}{!}{\begin{tabular}{llccccccl}\toprule
variant & description & physics fit & distribution & category prior & look-alike prior & force\,$\downarrow$ & query\,$\downarrow$ & final position\,$\downarrow$ ($\Delta$ to PhysVLMM) \\\midrule
B1 & joint only, no physics & \ding{55} & \ding{55} & \ding{55} & \ding{55} & -- & -- & \textbf{0.164} ($-$0.151 [$-$0.236, $-$0.086]) \\
B0 & category prior only & \ding{55} & \ding{51} & \ding{51} & \ding{55} & 56\,\% & 0.184 & 0.253 ($-$0.062 [$-$0.124, $-$0.011]) \\
B2$_{\text{pt}}$ & physics fit, point estimate & \ding{51} & \ding{55} & \ding{55} & \ding{55} & -- & -- & 0.460 (+0.145 [+0.101, +0.186]) \\
B2 & physics fit, posterior & \ding{51} & \ding{51} & \ding{55} & \ding{55} & -- & -- & 0.270 ($-$0.045 [$-$0.102, +0.004]) \\
B3 & B2 + category prior & \ding{51} & \ding{51} & \ding{51} & \ding{55} & 56\,\% & 0.184 & 0.285 ($-$0.030 [$-$0.059, $-$0.006]) \\
\textbf{PhysVLMM} & B3 + look-alike prior & \ding{51} & \ding{51} & \ding{51} & \ding{51} & \textbf{42}\,\% & 0.145 & 0.315 \\
\midrule
control & mean of the home's measured parts & \ding{55} & \ding{51} & \ding{55} & \ding{55} & \textbf{42}\,\% & \textbf{0.143} & -- \\
\bottomrule\end{tabular}}\end{table*}

\begin{table*}[t]\centering\small\setlength{\tabcolsep}{4pt}
\caption{Simulation ablations, same inputs and parts. Final position: CRPS of the final joint position under fixed 3--12\,N pulls (12320 pulls, the RigPI set), $\Delta$ = variant minus PhysVLMM (positive: PhysVLMM better) with 95\,\% CI over layouts. Force and query: look-alike rule of the paper on the rendered part points; true groups in brackets (diagnostic).}\label{tab:sim-ablation}
\begin{tabular}{llccl}\toprule
variant & description & force\,$\downarrow$ & query Brier\,$\downarrow$ & final position\,$\downarrow$ ($\Delta$ to PhysVLMM) \\\midrule
B1 & joint only, no physics & -- & -- & 0.743 (+0.682 [+0.675, +0.688]) \\
B0 & category prior only & 58\,\% & 0.183 & 0.131 (+0.070 [+0.066, +0.074]) \\
B2$_{\text{pt}}$ & physics fit, point estimate & -- & -- & 0.208 (+0.146 [+0.145, +0.149]) \\
B2 & physics fit, posterior & -- & -- & 0.169 (+0.107 [+0.103, +0.110]) \\
B3 & B2 + category prior & 58\,\% & 0.183 & 0.064 (+0.003 [+0.002, +0.003]) \\
\textbf{PhysVLMM} & B3 + look-alike prior & 50\,\% (41\,\%) & 0.165 (0.133) & 0.062 \\
\midrule
control & mean of the kitchen's measured parts & 53\,\% & 0.173 & -- \\
\bottomrule\end{tabular}\end{table*}

\begin{figure*}[t]\centering
\includegraphics[width=\linewidth]{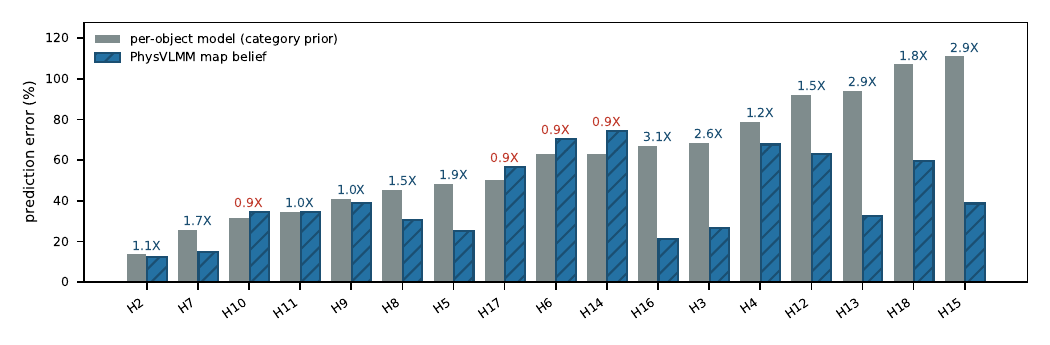}
\caption{Force error of untouched parts per home, per-object prior vs.\ PhysVLMM; labels: improvement factor.}\label{fig:homes}
\end{figure*}
\Cref{tab:ablation} switches the components of PhysVLMM on one at a time, with the same inputs; B2 fits the pulled part under a broad zero-mean prior, without other parts.
\begin{itemize}
\item \emph{Look-alike prior} (B3 $\to$ PhysVLMM). For a part that has not been touched, both the force and the query improve (log-error difference \abFCI; Brier difference \abQCI), and the map belief commits to an answer more often (\abQAnsOurs\,\% of the queries instead of \abQAnsCat\,\%; \abQFSOurs\,\% of its ``yes'' answers are false-safe). This is the gain that Propositions~\ref{prop:passive} and~\ref{prop:measured} leave open: evidence from other measured parts. The control, which averages every measured part of the same type in the home, does as well (log-error difference \abFCIHome; Brier difference \abQCIHome), so in these homes the look-alike groups add nothing measurable over the whole home.
\item \emph{Per home and per force level.} The error of PhysVLMM is lower in \homesBetter{} of \nHomesPred{} homes, by at least 5\,\% in \homesClear{} (\cref{fig:homes}). Per force (\cref{tab:q4}), the per-object model never commits at 3 and 5\,N, while the map belief commits mostly at 3 and 8\,N. Look-alikes from the map's semantic features instead of the geometry make no significant difference (\semCommonN{} parts; log-error difference \semCommonCI).
\item \emph{Posterior} (B2$_{\text{pt}}$ $\to$ B2). Retaining the unidentifiable direction of a one-way pull lowers the final-position error; the whole-trajectory error rises (\cref{tab:t2}).
\item \emph{Priors for the final position} (B2 $\to$ B3 $\to$ PhysVLMM). Neither prior improves the final position of a pulled part, for which its own data dominate; over the whole trajectory, both priors lower the error (\cref{tab:t2}).
\item \emph{No physics} (B1). The mechanism alone gives the smallest final-position error, for the reason below.
\end{itemize}

\begin{figure}[t]\centering
\includegraphics[width=\linewidth]{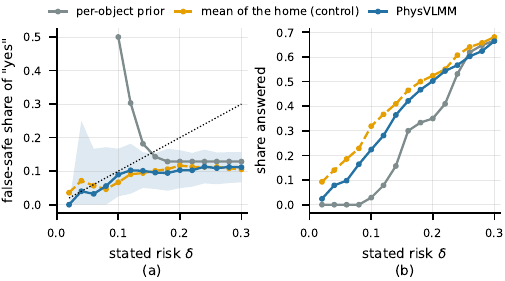}
\caption{Query ``will $F$\,N keep it moving?'' on untouched parts, swept over the stated risk $\delta$. (a) Share of ``yes'' answers that were false-safe (band: 95\,\% CI over homes for PhysVLMM; dotted: $\delta$). (b) Share of queries answered.}\label{fig:risk}
\end{figure}
\paragraph{Stated risk.} A planner sets the risk $\delta$ of a false-safe answer. Swept from 0.02 to 0.30 over the \riskN{} queries of the untouched parts (\cref{fig:risk}a), the false-safe rate of PhysVLMM's ``yes'' answers stays at or below $\delta$, whereas the per-object prior exceeds it where it first answers ($\delta$ = \riskPriorLo--\riskPriorHi). The mean of the home also stays close to $\delta$ and answers slightly more often (\cref{fig:risk}b), as for the force.

\paragraph{Final position.}\label{sec:b1}
B1 predicts that a part always runs to its stop. People pulled most parts to the stop (\spReachShare\,\% of pulls end within 20\,\% of it), so this guess is usually right. On the pulls that stopped short, B1 and the point estimate B2$_{\text{pt}}$ are the worst methods and B2 the best, and PhysVLMM does not significantly beat B1 (B1 minus PhysVLMM \spShortCI; \cref{fig:results}b). B1 also cannot give the force of an untouched part, answer the force query or predict timing. Testing the motion response properly therefore needs fixed actions, which Hoi! does not contain; with fixed pulls in simulation, the order reverses (\cref{tab:sim-ablation}).

\paragraph{Ablations in simulation.} \Cref{tab:sim-ablation} repeats the ablations under fixed pulls. Unlike on Hoi!, the look-alike prior also improves the final position of a pulled part over B3, and the force of an untouched part over the mean of the kitchen (log-error difference \simForceHomeCI). The physics of look-alike parts is, however, drawn from a group model fitted on Hoi!, so this gain shows what the grouping delivers when that model holds, not that it holds in real homes.

\subsection{Kinematic belief and collision}\label{sec:kin}
\paragraph{Mechanism and axis.} Over all \tOneN{} pulls of the \tOneHomes{} homes (\fbOursAcc\,\% on the \fbN{} held-out pulls of \cref{tab:published}), the mechanism belief identifies the joint type in \tOneAcc\,\% of cases (doors \tOneAccDoor\,\%, drawers \tOneAccDrawer\,\%), with a Brier score of \tOneBrier{} against \tOneBrierPrior{} for the prior alone. The median axis errors are \tOneAxDrawer$^\circ$ for drawers, and \tOneAxDoor$^\circ$ and \tOnePosDoor\,cm for door hinges. When the type is wrong, the belief still gives the wrong type a mean probability of \tOneConfWrong, because a person's grip on a handle is not rigid and the gripper path then fits the wrong joint well.

\paragraph{Collision.} Every part is swept along its joint and checked against the scanned scene. The opening a person actually reached is free of collision, so a collision predicted before 90\,\% of it is a false alarm. On the \colN{} parts of the held-out homes, PhysVLMM's swept part points raise false alarms on \colOurs\,\% of parts, against \colBox\,\% when the part is approximated by its bounding box (ablation). Hoi! contains no collisions, so missed collisions are measured only in simulation: of the \simCollBlocked{} blocked parts, PhysVLMM misses \simMissed\,\% and the bounding box \simMissedBox\,\%, with false alarms on \simFA\,\% of the free parts. The collision module is therefore not yet reliable enough for a safety guarantee.

\section{Conclusion}\label{sec:conclusion}
This paper presented PhysVLMM, a method that maintains a physical belief for every articulated part of a Vision-Language-Motion Map. Its three steps follow from two identifiability propositions: it keeps the direction that a one-way pull leaves unidentified as a posterior, passes each measured pull in closed form to the untouched look-alike parts, and answers a planner's queries at a stated risk or abstains. On real doors and drawers in \nHomes{} homes, it identifies the joint type more often than learned single-view methods and predicts the response to a pull with lower error than differentiable simulation, at four orders of magnitude lower cost. In these homes, pooling all measured parts of a home does as well as the look-alike groups, and simpler variants predict where a pulled part stops at least as well. Future work will evaluate lids, windows and valves, fixed actions of a real robot with collisions and partial openings, perceived instead of annotated parts, and buildings other than homes.

{
    \small
    \bibliographystyle{ieeenat_fullname}
    \bibliography{main}
}

\twocolumn[\centering\Large\textbf{Appendix}\par\vspace{1.0em}]  %
\appendix
\section{Method details}\label{sec:s-method}
\paragraph{Proof of Proposition~\ref{prop:passive}.} For any $\theta$, the choice $\tau_a(t) + \tau_{\mathrm{ext}}(t) = \varphi(t)^{\!\top}\theta$ is piecewise continuous (it jumps by $2\tau_c$ where $\dot q$ changes sign) and reproduces $q(t)$ exactly by~\eqref{eq:linear}. With a prior on the force, the trajectory determines the force given $\theta$, which gives the stated likelihood; the change of variables from force to trajectory adds a $\theta$-dependent Jacobian.

\paragraph{Proof of Proposition~\ref{prop:measured}.} Without $\tau_{\mathrm{ext}}$ and contact, \eqref{eq:linear} is the linear system $\Phi\theta_i = \boldsymbol\tau_a$, which has a unique solution if and only if $\Phi$ has full column rank. For constant $\tau_a$, $\Phi\theta_i = \tau_a\mathbf 1 = \Phi(\tau_a e_5)$, because the last column of $\Phi$ is $\mathbf 1$; hence $\theta_i - \tau_a e_5$ lies in the null space of $\Phi$. If $\mathrm{sgn}\,\dot q$ is constant, the third column is a multiple of the last. The likelihood of part $i$'s data depends on $\theta_i$ only, which gives the last statement.

\paragraph{Contact and belief update.} The contact reactions $\tau_{\mathrm{con}}$ in~\eqref{eq:joint} satisfy the complementarity
\begin{equation}
0 \le \lambda_n \;\perp\; d(q) \ge 0
\label{eq:comp}
\end{equation}
between the normal contact force $\lambda_n$ and the signed distance $d(q)$ of the link to obstacles and joint stops. An unmeasured external force $z_t$ is a latent input and is marginalised in the transition,
\begin{equation}
\begin{aligned}p(\mathbf q_t \mid \mathbf q_{t-1}, u_{t-1}, m, \theta) &\\ = \!\int\! p(\mathbf q_t \mid \mathbf q_{t-1}, &\, u_{t-1}, z, m, \theta)\, p(z)\, dz,\end{aligned}
\label{eq:marg}
\end{equation}
and the belief~\eqref{eq:belief} is updated recursively with the measurement $Y_t$:
\begin{equation}
\begin{aligned}\mathcal B_{i,t} \propto{}& p(Y_t \mid \mathbf q_t, m, \theta)\\ &\times\!\int\! p(\mathbf q_t \mid \mathbf q_{t-1}, u_{t-1}, m, \theta)\, \mathcal B_{i,t-1}\, d\mathbf q_{t-1}.\end{aligned}
\label{eq:filter}
\end{equation}
Because $\theta_i$ is static, this reduces to a batch posterior over each measured pull. The implementation takes $p(z)$ zero-mean Gaussian and white in time; in the linear model~\eqref{eq:linear} the marginal is then the Gaussian noise of the regression~\eqref{eq:post}, with variance $\sigma^2_{\mathrm{eff}}$. The rollouts of~\eqref{eq:roll} assume no unmeasured force during the predicted action.

\begin{figure}[t]\centering
\begin{tikzpicture}[scale=1.45, every node/.style={font=\scriptsize}]
\fill[gray!22] (-0.1,-0.32) rectangle (2.5,0); \node at (1.3,-0.16) {cabinet (fixed); closed door at $q=0$};
\fill[blue!8] (0,0) -- (2.4,0) arc (0:40:2.4) -- cycle; \node[blue!60!black] at (1.85,0.42) {swept volume};
\draw[dotted, gray] (40:2.4) arc (40:90:2.4);
\fill[orange!25, draw=orange!70!black] (0.95,2.17) rectangle (1.95,2.7); \node[align=center] at (1.45,2.44) {obstacle (scan)};
\draw[dashed, red!70!black] (0,0) -- (65.5:2.4); \node[red!70!black, left, align=right] at (65.5:1.3) {first contact\\ at $q_{\mathrm{free}}$};
\draw[very thick, blue!70!black] (0,0) -- (40:2.4); \node[blue!70!black, rotate=40, above] at (40:1.0) {door link $D$};
\filldraw[blue!70!black] (40:2.05) circle (1.2pt); \node[below right] at (40:2.05) {handle $\mathbf r$};
\draw[-{Latex[length=2mm]}, thick, green!50!black] (40:2.05) -- ++(130:0.6); \node[green!40!black, right] at ($(40:2.05)+(130:0.45)+(0.08,0.08)$) {pull $w$};
\draw[-{Latex[length=1.5mm]}] (0.55,0) arc (0:40:0.55); \node at (18:0.75) {$q$};
\filldraw (0,0) circle (1.4pt); \node[above left, align=right] at (0,0) {hinge $H$,\\ axis $\hat a_i$ ($\odot$)};
\end{tikzpicture}
\caption{Worked door model (plan view). The door link rotates about the hinge axis by the joint coordinate $q$~\eqref{eq:poe}; a wrench $w$ at the handle drives it through~\eqref{eq:joint}. The swept volume of the link is tested against the scanned scene; the first contact opening $q_{\mathrm{free}}$ bounds the collision-free motion.}\label{fig:door}
\end{figure}
\begin{table}[t]\centering\small\setlength{\tabcolsep}{3pt}
\caption{Identifiability of the door dynamics $\theta = (I, b, \tau_c, k, \tau_0)$ from a \emph{measured} applied torque (simulated single revolute joint; rank of the Fisher information out of 5). Without a measured torque the rank is 0 (\cref{prop:passive}).}\label{tab:fisher}
\begin{tabular}{lcc}\toprule
applied torque profile & rank & condition no. \\\midrule
constant force, opening only & 3/5 & $\infty$ \\
ramped force, opening only & 4/5 & $\infty$ \\
pull, then release (closer returns the door) & 5/5 & 17 \\
two force levels, then release & 5/5 & 13 \\
\bottomrule\end{tabular}\end{table}

\paragraph{VLMM layer (M1).} For a part with label $\ell_i$, the semantic feature is the CLIP text embedding $\mathbf f_i = \mathrm{CLIP}(\text{``a photo of a } \ell_i\text{''})$. The movability prior $\rho_i$ comes from VLMM's per-label language-model query. The observed motion is $o_i = s_i/\sigma_p$, where $s_i$ is the tracked stroke of the link and $\sigma_p = 1$\,cm the pose noise, and its reliability $\kappa_i$ is the confidence of the mechanism belief below. The motion class $y_i$ and its confidence $c_i$ follow VLMM's fusion rule, in which reliable observed motion overrides the prior~\cite{ghosh2026vlmm}. Because $o_i$ and $\kappa_i$ are computed from the same observations as the link motion, they parameterise the prior or classify the element; they are never multiplied in as a second, independent likelihood.

\paragraph{Mechanism hypotheses (M3).} From the link poses $(\mathbf c_t, R_t)$ observed during a motion (M2), two models are fitted.
\begin{itemize}
\item \emph{Prismatic:} a straight path with no rotation. The residuals are the perpendicular distances to the principal line and the rotation angles $\|\log(R_t R_0^{\!\top})\|$.
\item \emph{Revolute:} a rigid rotation about a fixed axis, with
\begin{equation}
\mathbf c_t = \mathbf a + \mathrm{Rot}(\hat a, \phi_t)(\mathbf c_0 - \mathbf a),
\label{eq:rigid}
\end{equation}
where $\hat a$ is the angle-weighted mean of the relative rotation vectors, $\phi_t$ their projection on $\hat a$, and $\mathbf a$ the least-squares solution of $(I - \mathrm{Rot}(\hat a, \phi_t))\mathbf a = \mathbf c_t - \mathrm{Rot}(\hat a, \phi_t)\mathbf c_0$.
\end{itemize}
Each model gets a profile log-likelihood with the residual variances estimated from the data, and a BIC. In principle, a few degrees of motion suffice for the joint type: in an idealised test on one rotation measurement with 1$^\circ$ of pose noise, an opening of 1$^\circ$, 2$^\circ$ or 5$^\circ$ makes a revolute joint preferred over a prismatic one with probability \kinOne, \kinTwo{} or \kinFive. Since a person's grip is not rigid, the evidence is combined with a position-only test (line vs.\ circular arc), the rotation magnitude $\psi_i$ and the path length $s_i$ into a mechanism belief:
\begin{align}
P(m_i = \text{rev} \mid \text{motion}) &= \sigma\big(\beta^{\!\top} z_i\big), \label{eq:mech}\\
z_i &= \big(1, \ell(\Delta\mathrm{BIC}_i), \ell(\Delta\mathrm{BIC}^{\mathrm{arc}}_i), \nonumber\\ &\qquad \log(1+\psi_i), \log s_i\big), \nonumber
\end{align}
with $\ell(v) = \mathrm{sgn}(v)\log(1+|v|)$ and $\beta$ fitted by L2-regularised logistic regression (unit weight) on other homes. The axis estimate is the principal direction of the path for a prismatic joint, and the plane normal and centre of the fitted arc for a revolute joint.

\paragraph{Effort from a measured pull (M4).} A pull provides the handle force $f(t)$, the gripper position $\mathbf c(t)$ and the grip signal $\gamma(t)$. The grasped phase and the moving set are
\begin{align}
\mathcal T_w &= \{t : \gamma(t) > \gamma_0\}, \\
\Omega_w &= \{t \in [\min\mathcal T_w, \max\mathcal T_w] : \|\dot{\mathbf c}(t)\| > v_0\},
\end{align}
and the effort of the part is
\begin{equation}
x_i = \frac{1}{|W_i|}\sum_{w \in W_i} \log \operatorname*{median}_{t \in \Omega_w} \|f(t)\|,
\label{eq:extract}
\end{equation}
with $\gamma_0 = 5$ and $v_0 = 2$\,cm/s.

\paragraph{Look-alike groups (M4).} Grouping uses only what is perceived before any part moves: the part's description and its scan points $\mathcal P_i$. The type is prismatic if the description names a drawer or a sliding or pull-out part, and revolute otherwise. The front of a part is the dominant plane of $\mathcal P_i$ (RANSAC), with unit normal $\hat n_i$ and centre $\mathbf s_i$. Two parts of the same home and type are look-alikes by geometry, by semantics, or by either:
\begin{align}
A^{\mathrm{geo}}_{ij} &= \mathbf 1\big[|\hat n_i\cdot\hat n_j| > \cos\epsilon_a,\ \|\mathbf s_i - \mathbf s_j\| < d_0\big], \label{eq:adj}\\
A^{\mathrm{sem}}_{ij} &= \mathbf 1\big[\mathbf f_i^{\!\top}\mathbf f_j > c_0\big], \qquad A^{\cup} = A^{\mathrm{geo}} \vee A^{\mathrm{sem}}, \nonumber
\end{align}
with $\cos\epsilon_a = 0.98$, $d_0 = 1.5$\,m and $c_0 = 0.9$. Groups $g(i)$ are the connected components of the chosen graph (single linkage); the geometric graph is the default. Parts with fewer than 50 scan points or without a description are left out of the look-alike groups and of the hyper-parameter fits. The joint annotations are not used for grouping.

\paragraph{Map belief over the effort (M4).} With the model~\eqref{eq:model}, specialising~\eqref{eq:filter} with natural parameters initialised to $(\lambda_g, \eta_g) = (1/\sigma_0^2, \mu_c/\sigma_0^2)$, every measured effort $x_i$ in group $g$ updates
\begin{equation}
\lambda_g \leftarrow \lambda_g + \tfrac{1}{\omega^2}, \quad \eta_g \leftarrow \eta_g + \tfrac{x_i}{\omega^2},
\label{eq:update}
\end{equation}
which gives the predictive~\eqref{eq:pred}. A repeat pull of a part replaces its $x_i$ (the mean over its pulls) and does not add a second update. The category mean $\mu_c$ is the mean effort of the type in other homes, and $(\sigma_0^2, \omega^2)$ maximise the marginal likelihood of the groups of other homes over a logarithmic grid of $14 \times 14$ values ($\sigma_0^2$ from 0.005 to 1.5, $\omega^2$ from 0.005 to 1.0; for the outlier-aware variant, $\pi \in \{0.6, 0.7, 0.8, 0.9, 0.95\}$). The efforts $y_g$ of a group of $n_g$ parts are Gaussian with covariance $\omega^2 I + \sigma_0^2\mathbf 1\mathbf 1^{\!\top}$, and with $d = y_g - \mu_c\mathbf 1$ the Sherman--Morrison identity gives
\begin{multline}
-2\log p(y_g) = n_g\log 2\pi + (n_g-1)\log\omega^2 + \log(\omega^2 + n_g\sigma_0^2) \\
+ \frac{1}{\omega^2}\Big(\|d\|^2 - \frac{\sigma_0^2(\mathbf 1^{\!\top} d)^2}{\omega^2 + n_g\sigma_0^2}\Big).
\label{eq:ml}
\end{multline}

\paragraph{Map belief over the dynamics (M4).} For the pulled part, $\theta_i$ gets a Gaussian prior from the map, built per parameter $p$: the category median and variance $(\mu_{c,p}, s^2_{c,p})$ of parameters identified from two-direction pulls of parts in other homes (in simulation, from all five pulls), shrunk towards the measured look-alikes like~\eqref{eq:pred} with within- and between-group variances estimated by moments from the look-alike groups of other homes, giving $(\mathbf m_0, V_0)$. The pull's regressors~\eqref{eq:linear} are computed in the joint coordinate of the estimated axis, from the samples that move ($|\dot q| > 1$\,cm/s or 0.03\,rad/s). Each pull is first placed on its true 100\,Hz time base. Samples within 10\,\% of the stroke from either end are excluded, because there the end-stop contact of~\eqref{eq:comp} is active, together with the 15 samples on either side of that zone, over which the 31-sample smoothing spreads an end-stop impact; samples inside a recording gap and their 15-sample neighbourhood are excluded as well. This gives the Gaussian posterior
\begin{align}
V &= \big(V_0^{-1} + \Phi^{\!\top}\Phi/\sigma^2_{\mathrm{eff}}\big)^{-1}, \label{eq:post}\\
\mathbf m &= V\big(V_0^{-1}\mathbf m_0 + \Phi^{\!\top}\boldsymbol\tau_a/\sigma^2_{\mathrm{eff}}\big), \nonumber
\end{align}
where $\sigma^2_{\mathrm{eff}}$ is the residual variance multiplied by a fixed factor of 15, about half the 31-sample smoothing window. The factor is not estimated from the data: it widens the posterior to absorb the error that noise in the smoothed derivatives causes in the regressors (in a synthetic test with the pose and force noise of Hoi!, the fitted inertia is otherwise about 30\,\% too low). Physical bounds ($I \ge I_{\min}$, $b, \tau_c \ge 0$) are applied by clamping each sample in the rollout. Along the unidentifiable direction of a one-way pull (Proposition~\ref{prop:measured}), the posterior precision equals that of the prior, so this direction is not resolved arbitrarily.

\paragraph{Response (M5).} For an action $u$ with generalised force $\tau_a(t)$, posterior samples $\theta^{(s)}$ are propagated through~\eqref{eq:joint} with a semi-implicit step of $\Delta t = 0.01$\,s:
\begin{align}
\dot q_{n+1} &= \frac{\dot q_n + \Delta t\,\big(\tau_{a,n} - \tau_c\,\mathrm{sgn}\,\dot q_n - k q_n - \tau_0\big)/I}{1 + \Delta t\, b/I}, \label{eq:roll}\\
q_{n+1} &= q_n + \Delta t\,\dot q_{n+1}. \nonumber
\end{align}
Stiction keeps $\dot q = 0$ while $|\tau_{a,n} - k q_n - \tau_0| \le \tau_c$, and the joint stops at the range reached in the training pull, extended by 5\,\% at each end (the limits are not observed beyond it), the active case of~\eqref{eq:comp}. The samples form the motion part of~\eqref{eq:response}.

For contact, the link's point set $\mathcal P_i$ is reduced to its dominant plane by RANSAC, the handle points just in front of it are attached to it, and it is swept along the joint by~\eqref{eq:poe}:
\begin{equation}
\mathcal P_i(q) = \mathbf a_i + \mathrm{Rot}(\hat a_i, q)(\mathcal P_i - \mathbf a_i) \quad\text{or}\quad \mathcal P_i + q\,\hat a_i .
\end{equation}
The contact opening $q_{\mathrm{free}}$ (\cref{fig:door}) is the first $q$ at which $\min_{\mathbf x \in \mathcal P_i(q)} \operatorname{dist}(\mathbf x, \mathcal O_i) < d_c$, \ie $d(q) < d_c$ in~\eqref{eq:comp}. The obstacle set $\mathcal O_i$ contains the scanned points more than 3\,cm in front of the part's face, excluding fixtures within 4\,cm of its outline, within reach of the swept part (its extent plus 0.9\,m for drawers or 0.2\,m for doors), and $d_c = 1$\,cm (1.5 times the scan point spacing). A collision predicted before 90\,\% of the opening a person reached counts as a false alarm.

\paragraph{Queries (M6).}
\begin{itemize}
\item \emph{Movable} and \emph{moved} are read from $y_i$; \emph{direction} from the mechanism belief and the axis.
\item \emph{Force:} with the predictive~\eqref{eq:pred}, $(\hat x_j, v_j)$, the risk test~\eqref{eq:risk} has the closed form
\begin{equation}
P(\bar F_j \le F \mid \mathcal H_t) = \Phi_{\mathcal N}\Big(\frac{\log F - \hat x_j}{\sqrt{v_j}}\Big),
\label{eq:answer}
\end{equation}
answered ``yes'' at $\ge 1 - \delta$, ``no'' at $\le \delta$, and ``insufficient evidence'' otherwise.
\item \emph{Collision} before the target opening $q^\ast$ is reported if $q_{\mathrm{free}} < q^\ast$.
\end{itemize}

\paragraph{Limit of sharing.}
\begin{proposition}[Limit of sharing]\label{prop:limit}
Under~\eqref{eq:model}, the predictive variance $v_j(n) = (1/\sigma_0^2 + n/\omega^2)^{-1} + \omega^2$ decreases with the number $n$ of measured look-alikes towards $\omega^2$. Hence, for any $n$, the probability that an untouched part's effort lies within a log-tolerance $\varepsilon$ of its prediction is at most $2\Phi_{\mathcal N}(\varepsilon/\omega) - 1$.
\end{proposition}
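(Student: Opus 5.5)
The argument has two parts: the monotonicity of the predictive variance, and a bound on the probability of a Gaussian interval. I will work directly from the hierarchical model~\eqref{eq:model} and the natural-parameter update~\eqref{eq:update}.

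\emph{Step 1 (form of the variance).} Start from the initialisation $(\lambda_g,\eta_g)=(1/\sigma_0^2,\mu_c/\sigma_0^2)$. After $n$ measured look-alikes, $n$ additions of $1/\omega^2$ give $\lambda_g = 1/\sigma_0^2 + n/\omega^2$. Substituting into~\eqref{eq:pred} yields $v_j(n) = (1/\sigma_0^2+n/\omega^2)^{-1}+\omega^2$, as stated. A repeat pull replaces $x_i$ rather than adding an update, so $n$ counts distinct measured parts. The first term is the reciprocal of a strictly increasing, unbounded function of $n$. It therefore decreases strictly and tends to $0$, so $v_j(n)\downarrow\omega^2$. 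Moreover $v_j(n)>\omega^2$ for every finite $n$, because the first term is positive.

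\emph{Step 2 (which distribution the bound is about).} The probability in the statement could be read in two ways. Under the predictive~\eqref{eq:pred}, the prediction $\hat x_j$ is the predictive mean, and the same argument goes through with $\sqrt{v_j(n)}\ge\omega$ in place of $\omega$. The more informative reading conditions on the true group effort $\mu_{g(j)}$. Under~\eqref{eq:model}, $x_j\mid\mu_g\sim\mathcal N(\mu_g,\omega^2)$, independently of the other members. Since $\hat x_j$ depends only on the measured efforts, we have $x_j-\hat x_j = (x_j-\mu_g)+(\mu_g-\hat x_j)$. Conditional on $\mu_g$ and on the measured data, this is $\mathcal N(c,\omega^2)$ with the constant shift $c=\mu_g-\hat x_j$.

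\emph{Step 3 (Anderson-type bound).} For $Z\sim\mathcal N(c,s^2)$, the probability $P(|Z|\le\varepsilon)=\Phi_{\mathcal N}((\varepsilon-c)/s)-\Phi_{\mathcal N}((-\varepsilon-c)/s)$ is maximised at $c=0$. To see this, differentiate in $c$: the derivative is $\big[\phi((-\varepsilon-c)/s)-\phi((\varepsilon-c)/s)\big]/s$. This vanishes only at $c=0$, is negative for $c>0$ and positive for $c<0$, by the symmetry and unimodality of $\phi$. The same probability is also decreasing in $s$, since $2\Phi_{\mathcal N}(\varepsilon/s)-1$ decreases in $s$. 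Applying this with $s=\omega$ conditionally gives $P(|x_j-\hat x_j|\le\varepsilon\mid\mu_g,\text{data})\le 2\Phi_{\mathcal N}(\varepsilon/\omega)-1$. Taking expectations over $\mu_g$ and the data preserves the bound. Under the predictive reading, the bound is $2\Phi_{\mathcal N}(\varepsilon/\sqrt{v_j(n)})-1$, which is at most the stated one because $v_j(n)>\omega^2$. The bound holds uniformly in $n$ and is approached only as $n\to\infty$.

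The only delicate step is Step 2: making precise that $\hat x_j$ is independent of $x_j-\mu_g$, so that the within-group noise $\omega^2$ cannot be reduced by any number of look-alike measurements. Everything else is a routine computation.
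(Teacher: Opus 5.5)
Your proof is correct. The paper's proof is exactly what you call the predictive reading: under \eqref{eq:pred}, $x_j-\hat x_j\sim\mathcal N(0,v_j(n))$, and for a zero-mean Gaussian the mass of $[-\varepsilon,\varepsilon]$ decreases with the variance. So $v_j(n)\ge\omega^2$ gives the bound in one line. The paper also takes the formula for $v_j(n)$ as given, whereas your Step 1 derives it from the natural-parameter updates.

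Your main argument takes a different route. You condition on the true group effort $\mu_{g(j)}$ and then use the fact that a centred interval carries the most Gaussian mass when the mean shift $c$ is zero. This costs an extra lemma, but it buys a stronger statement. The bound then holds pointwise in $\mu_g$, so it does not rely on the group prior $\mathcal N(\mu_c,\sigma_0^2)$ being correctly specified. It also holds for any predictor that is a function of the other parts' efforts, not only the posterior mean $\eta_g/\lambda_g$. This is because all you use is that $x_j$ is conditionally independent of everything else given $\mu_{g(j)}$.

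That stronger form is arguably the real content of a ``limit of sharing'': no pooling rule can beat the within-group spread $\omega$. This includes the mean of the home and the outlier-aware pooling the paper also evaluates. It also matches the paper's own numerical gloss (``within a factor of 1.25 when the group effort is known exactly''), which is your conditional case with $c=0$ and $s=\omega$. The paper's version, by contrast, is shorter and speaks directly about the predictive that the query \eqref{eq:answer} actually uses. Its reliance on $x_j-\hat x_j$ being centred is specific to the posterior mean under a correctly specified model.
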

\begin{proof}
$x_j - \hat x_j \sim \mathcal N(0, v_j(n))$, and $v_j(n) \ge \omega^2$; for a zero-mean Gaussian, $P(|x_j - \hat x_j| \le \varepsilon)$ decreases with the variance.
\end{proof}
On Hoi!, the fitted spread within a group is a factor of \tauFactorFit{} ($e^{\omega}$), so the model implies that about \boundShare\,\% of untouched parts are predicted within a factor of 1.25 when the group effort is known exactly (prior alone: \priorShare\,\%). On the held-out parts, PhysVLMM reaches \obsShareOurs\,\%, the mean of the home \obsShareHome\,\% and the prior \obsSharePrior\,\%.

\paragraph{Complexity.} Grouping is $O(n^2)$ once per map. An effort update or query is $O(1)$. A dynamics posterior is $O(N)$ in the number of samples of the pull. A response costs $S$ rollouts of the action's length, and a contact query one sweep with a $k$-d tree. \Cref{alg:update,alg:query} summarise the procedure.

\section{Processing details}\label{sec:s-proc}
\paragraph{Files used.} Per recording session, the inputs are the gripper force/torque stream (gravity-compensated and filtered by the dataset), the gripper trigger stream, the Aria SLAM trajectory aligned to the scan frame, the session time-synchronisation file, the confirmed interaction windows and the per-home axis annotations of the laser scans. Two homes have no gripper force data. In one home, one of two sessions has no force file. One window file in the release is empty.

\paragraph{Motion and force.} Gripper speed is computed from the trajectory resampled at 10\,Hz. A window is used if the gripper moved enough and stayed near the part, measured without the joint annotation: the range of the gripper path along its own principal direction is at least 5\,cm (drawers) or the range of its rotation about its own principal rotation direction is at least 10$^\circ$ (doors), with the type from the description, and the path's median distance to the part's scan points is below 1\,m. The joint annotations give the travel along the annotated axis that is reported for description only. The running force of a pull is the median handle force while the part moves, which also contains some grip and sideways force, and the effort $x_i$ is the mean of its logarithm over the part's opening and closing pulls. For the dynamics, the force is rotated from the sensor into the gripper frame with the rotation that best aligns force and gripper velocity over the pulls of the other parts in the same session.

\paragraph{Speed-normalised force.} The force is projected on its first principal axis in the sensor frame, during motion, with the sign chosen so that the median is positive. The model $F = f_0 + b v + m a$ is fitted by least squares, with $v$ smoothed over 0.3\,s and $a = \dot v$, and $f_{\mathrm{ref}} = f_0 + b\,v_{\mathrm{ref}}$ with $v_{\mathrm{ref}}$ the median moving speed over all windows (\vRef\,m/s). Windows with less than 1\,s of motion, a speed range under 0.05\,m/s, or $f_{\mathrm{ref}} \le 0$ are excluded.

\section{Audit of the window-to-part labels}\label{sec:s-audit}
For drawers, all drawers of a home are ranked by the mean distance of the grasped gripper path to their axis line. For doors, all hinges are ranked by the relative spread of the path's distance to the hinge line, since a handle moves on a circle about its own hinge. The assigned part ranks first in \mapPriFirst\,\% of drawer windows (chance \mapPriChance\,\%) and \mapRevFirst\,\% of door windows (chance \mapRevChance\,\%). Most disagreements are exact or near ties between coaxial or parallel neighbours, such as the doors of one fridge. Removing the \mapClear{} windows that clearly disagree leaves the look-alike ratio at \ratioCleanBoth{} and gives, for outlier-aware pooling, an NLL gain over the category prior of \dPoolPriorClean{} (CI \ciPoolPriorClean) instead of \dPoolPrior{}, and outlier-aware minus hierarchical pooling of \dPoolHierClean{} (CI \ciPoolHierClean).

\section{Known-answer tests}\label{sec:s-tests}
Every component was checked on synthetic cases with known answers before the real-data evaluation:
\begin{itemize}
\item a noisy door arc yields its hinge within 0.5$^\circ$ and 0.02\,cm;
\item a noisy drawer path yields its axis within 0.01$^\circ$;
\item a two-way pull of a simulated drawer with $\theta = (4, 6, 3, 2, 1.5)$, run through the same pipeline as the real data, yields $(4.18, 6.81, 2.64, 1.89, 1.71)$, and the held-out rollout errs by 0.07\,cm; with a 0.5\,s recording gap during the opening, the held-out rollout errs by 0.29\,cm;
\item a pull that reaches its stop within 1\,s leaves too few samples away from the stop to identify $\theta$ (a known limit; its rollout is still reproduced, 0.17\,cm);
\item a one-way pull has a rank-deficient regressor (singular value ratio $10^{-17}$), as Proposition~\ref{prop:measured} states;
\item the closed form~\eqref{eq:answer} matches Monte Carlo to $1.1\times10^{-3}$;
\item the closed-form member density and the enumerated mixture of the outlier-aware pooling match a brute-force evaluation with generic multivariate normal densities to $7\times10^{-15}$, and every predictive density integrates to one on its grid;
\item a door swept towards a wall at 60$^\circ$ stops at 58$^\circ$.
\end{itemize}
The tests found three errors, which were fixed before the reported results:
\begin{itemize}
\item end-stop contact samples in the dynamics regression corrupted $\theta$ (inertia 0.5 instead of 4);
\item the 1\,cm contact distance had not been applied;
\item the 31-sample smoothing spread end-stop impacts into the samples kept for the regression, and pulls with recording gaps were treated as evenly sampled; both appeared once the test ran on the full pipeline.
\end{itemize}
The collision module was further revised on two development homes only; \cref{tab:t3} reports the held-out homes:
\begin{itemize}
\item the part's own handle was treated as an obstacle;
\item annotated part points included background behind the panel;
\item fixtures hugging a door's outline triggered contact.
\end{itemize}

\section{Adapters for the published methods}\label{sec:s-baselines}
\paragraph{FlowBot3D.} Released code and weights (\texttt{r-pad/flowbot3d}, commit 5189de2, checkpoint \texttt{model.ckpt}). Input: the scan of the setup in which the part was annotated, cropped to the part's extent plus 0.4\,m, 1200 points of which up to 40\,\% are the part (oracle part mask), in metres, centred. FlowBot3D predicts the motion direction of every point. Joint type and axis are obtained from the predicted flow on the part: a translation fits one direction, a rotation fits $f_i = \omega \times (\mathbf p_i - \mathbf c)$ by linear least squares; the type is decided by a logistic rule on the spread of flow magnitudes and the distance of the fitted axis, fitted on other homes. Because FlowBot3D was trained on objects in a fixed orientation, six input orientations were tried and the best on the two development homes (\fbOrient) is reported on the held-out homes.
\paragraph{GAPartNet.} Released code and checkpoint (\texttt{PKU-EPIC/GAPartNet}, commit c8d4ad2, \texttt{all\_best.ckpt}), run on CUDA 13 with two compatibility edits to its \texttt{epic\_ops} dependency (the newer PyTorch dispatch call and an explicit Thrust include). Input: a coloured crop of the scan around the part (0.2\,m margin, floor removed, $\pm$0.5\,m about the part's face, chosen over the FlowBot3D crop on the development homes), in a virtual camera frame looking at the part (OpenCV convention), 20\,000 points normalised to the unit ball by GAPartNet's own rule. The released test step reads ground-truth labels when they are given; GAPartNet's own semantic predictions are passed in the label slot and non-zero dummies in the NPCS slot, which reproduces its label-free path. The joint type is the GAPart class on the part (hinge: revolute, slider: prismatic; no actionable class on at least 5\,\% of the part's points: no answer, scored as wrong). A drawer's direction is the box's outward face normal, as in GAPartNet's manipulation code.
\paragraph{RigPI.} Released code (\texttt{Eigenoperator/RigPI}, commit 19b0c79) with its Newton/Featherstone differentiable rollout and optimiser (120 iterations). Per case, a one-joint URDF places the joint on the estimated axis with the limits of the training pull ($\pm$5\,\%), as for PhysVLMM. Inputs: the link poses of the training pull and the measured handle force at the gripper. RigPI's vision-language prior is not released; its initial values and bounds (0.2--5$\times$) come from the scanned panel (18\,mm wood, 650\,kg/m$^3$). Friction load: $m g$ with a 5\,cm radius for hinges. All rollouts, training and held-out, use 2\,ms sub-steps (the joint-limit spring diverges at 10\,ms when a pull reaches the stop). Parts with fewer than 50 scan points give no panel prior and are not exported. For \rigFail{} of the \rigCases{} exported cases RigPI returns no prediction (a non-finite gradient or objective during the fit), and for \rigNoFit{} more the training pull leaves too few samples away from the stop for the dynamics fit of PhysVLMM; the comparison uses the remaining \rigN{} cases for both methods.

\section{Detailed results}\label{sec:s-details}
\Cref{tab:homes,tab:t2,tab:t3} give the force error per home, the trajectory error of the held-out response and the collision check.
\begin{table}[t]\centering\small\setlength{\tabcolsep}{3pt}
\caption{Test homes (Hoi! dataset) and the typical error (\%) of the force predicted for a part \emph{before it is touched}: per-object model (category prior) vs.\ the PhysVLMM map belief. Parts: parts with at least one look-alike; groups: look-alike groups of at least two parts; pulls: valid pulls of these parts; homes are numbered, not named.}\label{tab:homes}
\resizebox{\linewidth}{!}{\begin{tabular}{lrrrrrr}\toprule
home & parts & groups & pulls & per-object & PhysVLMM & factor \\\midrule
H2 & 2 & 1 & 4 & 14 & 13 & 1.1$\times$ \\
H3 & 8 & 1 & 16 & 69 & 27 & 2.6$\times$ \\
H4 & 11 & 2 & 21 & 79 & 68 & 1.2$\times$ \\
H5 & 5 & 2 & 9 & 49 & 25 & 1.9$\times$ \\
H6 & 5 & 1 & 10 & 63 & 71 & 0.9$\times$ \\
H7 & 6 & 1 & 12 & 26 & 15 & 1.7$\times$ \\
H8 & 4 & 1 & 8 & 45 & 30 & 1.5$\times$ \\
H9 & 14 & 2 & 27 & 41 & 39 & 1.0$\times$ \\
H10 & 17 & 4 & 34 & 31 & 35 & 0.9$\times$ \\
H11 & 15 & 3 & 29 & 35 & 35 & 1.0$\times$ \\
H12 & 9 & 3 & 18 & 92 & 63 & 1.5$\times$ \\
H13 & 7 & 2 & 14 & 94 & 33 & 2.9$\times$ \\
H14 & 6 & 2 & 10 & 63 & 74 & 0.9$\times$ \\
H15 & 8 & 1 & 16 & 111 & 39 & 2.9$\times$ \\
H16 & 4 & 1 & 8 & 67 & 21 & 3.1$\times$ \\
H17 & 15 & 2 & 27 & 50 & 57 & 0.9$\times$ \\
H18 & 4 & 2 & 8 & 107 & 60 & 1.8$\times$ \\
\midrule
all & 140 & & & 56 & 42 & 1.3$\times$ \\
\bottomrule\end{tabular}}\end{table}

\begin{table}[t]\centering\small\setlength{\tabcolsep}{3pt}
\caption{Whole-trajectory error of the held-out response (RMSE as a fraction of the stroke; the same 253 pulls as \cref{tab:ablation}, whose motion column gives the final-position CRPS). --: B1 predicts no trajectory.}\label{tab:t2}
\begin{tabular}{lc}\toprule
predictor & trajectory RMSE\,$\downarrow$ \\\midrule
B1: mechanism only (runs to the stop) & -- \\
B0: category prior & 0.543 \\
B2$_{\text{pt}}$: physics fit, point estimate & 0.468 \\
B2: physics fit, posterior & 0.608 \\
B3: B2 + category prior & 0.502 \\
\textbf{PhysVLMM}: B3 + look-alike prior & 0.488 \\
\bottomrule\end{tabular}\end{table}

\begin{table}[t]\centering\small\setlength{\tabcolsep}{4pt}
\caption{Swept-volume collision check: share of parts for which a collision is predicted before 90\,\% of the opening a person actually reached (false alarm; lower is better). Development homes were used to set the edge tolerance.}\label{tab:t3}
\begin{tabular}{lccc}\toprule
joint axis & held-out homes & doors & drawers \\\midrule
estimated & 0.23 & 0.20 & 0.25 \\
annotated (oracle) & 0.19 & 0.16 & 0.22 \\
\bottomrule\end{tabular}\end{table}

\section{Ablation tables}\label{sec:s-tables}
The tables below give the full results behind the ablations of \cref{sec:ablation}. The predictors are the per-object category prior, the mean of the same home, the mean of the look-alikes, hierarchical pooling (the PhysVLMM effort belief), and outlier-aware pooling, in which each look-alike is a member with probability $\pi$ or an odd part drawn from the category distribution. All rules and metrics were fixed before the respective test was run. With look-alike groups from the scan (\cref{eq:adj}), the ratio $\omega^2/s^2$ of \cref{tab:pool} falls in the STOP range of its fixed rule: look-alikes are less alike than the annotated groups suggested. The held-out test of \cref{tab:pred}, which decides the claims of the paper, still shows a gain over the category prior.
\begin{table}[t]\centering\small\setlength{\tabcolsep}{2.5pt}
\caption{How alike are look-alikes? Within-group variance $\omega^2$ over the variance of all parts $s^2$ (log force; 0 = look-alikes identical, 1 = no more alike than random parts), with 95\,\% bootstrap CI over groups, and the mean squared log-force difference of look-alike pairs vs.\ other pairs in the same home. Fixed rule: GO if $\omega^2/s^2 \le 0.25$, STOP if $\ge 0.5$.}\label{tab:pool}
\resizebox{\linewidth}{!}{\begin{tabular}{llccc}\toprule
force value & data & $\omega^2/s^2$ [CI] & look-alike pairs & other pairs \\\midrule
open + close & all windows & 0.60 [0.41, 0.83] & 0.35 & 0.46 \\
opening & all windows & 0.77 [0.57, 0.99] & 0.55 & 0.58 \\
closing & all windows & 0.49 [0.28, 0.76] & 0.41 & 0.64 \\
\midrule
open + close & suspect windows removed & 0.60 [0.41, 0.81] & 0.37 & 0.58 \\
opening & suspect windows removed & 0.82 [0.62, 1.02] & 0.59 & 0.58 \\
closing & suspect windows removed & 0.44 [0.23, 0.73] & 0.38 & 0.68 \\
\midrule
open + close & speed-normalised & 0.50 [0.34, 0.73] & 0.60 & 1.17 \\
opening & speed-normalised & 0.55 [0.31, 0.84] & 0.58 & 1.00 \\
closing & speed-normalised & 0.48 [0.34, 0.69] & 1.10 & 1.88 \\

\bottomrule\end{tabular}}\end{table}

\begin{table}[t]\centering\small\setlength{\tabcolsep}{3pt}
\caption{Predicting an untouched part's force from the measured forces of the other parts in its home (leave-one-part-out; hyper-parameters fitted on the other homes). NLL and MAE of log force (lower is better); $\times$ = typical multiplicative error; cov.\ = coverage of the 80\,\% interval. Left: running force; right: speed-normalised force.}\label{tab:pred}
\resizebox{\linewidth}{!}{\begin{tabular}{lcccc@{\hskip 8pt}cc}\toprule
& \multicolumn{4}{c}{running force} & \multicolumn{2}{c}{speed-norm.} \\
predictor & NLL & MAE & $\times$ & cov. & NLL & MAE \\\midrule
category prior (per-object) & 0.852 & 0.446 & 1.56 & 0.79 & 1.184 & 0.636 \\
mean of the same home & 0.661 & 0.349 & 1.42 & 0.80 & 0.974 & 0.477 \\
mean of the look-alikes & 0.695 & 0.355 & 1.43 & 0.81 & 1.073 & 0.512 \\
hierarchical pooling & 0.662 & 0.352 & 1.42 & 0.81 & 0.980 & 0.495 \\
outlier-aware pooling & 0.635 & 0.342 & 1.41 & 0.81 & 1.015 & 0.503 \\
\bottomrule\end{tabular}}\end{table}

\begin{table}[t]\centering\small\setlength{\tabcolsep}{3pt}
\caption{Touch one, know many: share of a home's parts the robot must touch before 80\,\% of them are known within a factor of 1.25 (touched, or predicted within that factor). Mean over homes; random = 20 random touch orders; greedy = largest summed variance reduction.}\label{tab:touch}
\begin{tabular}{lcc}\toprule
predictor (touch order) & running force & speed-norm. \\\midrule
no sharing & 0.771 & 0.808 \\
hierarchical (random) & 0.673 & 0.765 \\
hierarchical (greedy) & 0.670 & 0.747 \\
outlier-aware (random) & 0.675 & 0.768 \\
outlier-aware (greedy) & 0.695 & 0.753 \\
\bottomrule\end{tabular}\end{table}

\begin{table}[t]\centering\small\setlength{\tabcolsep}{3pt}
\caption{Query ``will $F$\,N keep part $j$ moving?'' before touching $j$ ($\delta = 0.1$). False-safe: ``yes'' although more force is needed.}\label{tab:q4}
\resizebox{\linewidth}{!}{\begin{tabular}{lccccccc}\toprule
& & \multicolumn{3}{c}{per-object prior} & \multicolumn{3}{c}{\textbf{PhysVLMM} map belief} \\
grouping & $F$ & Brier & answered & false-safe & Brier & answered & false-safe \\\midrule
geometric & 3\,N & 0.188 & 0.00 & -- & 0.135 & 0.19 & -- \\
geometric & 5\,N & 0.247 & 0.00 & -- & 0.192 & 0.01 & 0.00 \\
geometric & 8\,N & 0.117 & 0.09 & 0.50 & 0.108 & 0.47 & 0.09 \\
semantic & 3\,N & 0.204 & 0.00 & -- & 0.131 & 0.32 & 0.00 \\
semantic & 5\,N & 0.218 & 0.00 & -- & 0.137 & 0.24 & 0.00 \\
semantic & 8\,N & 0.061 & 0.19 & 0.16 & 0.061 & 0.72 & 0.04 \\
\bottomrule\end{tabular}}\end{table}

\begin{table}[t]\centering\small\setlength{\tabcolsep}{3pt}
\caption{Grouping mistakes: each part is moved to a wrong look-alike group with probability $\rho$ (5 seeds). NLL difference outlier-aware $-$ hierarchical pooling with 95\,\% CI over homes (negative = outlier-aware better).}\label{tab:misgroup}
\begin{tabular}{ccc}\toprule
$\rho$ & running force & speed-normalised \\\midrule
0.0 & $-$0.04 [$-$0.16, +0.08] & +0.05 [$-$0.00, +0.14] \\
0.1 & $-$0.01 [$-$0.13, +0.12] & +0.05 [+0.00, +0.11] \\
0.2 & $-$0.05 [$-$0.15, +0.06] & +0.03 [+0.01, +0.06] \\
0.3 & $-$0.01 [$-$0.12, +0.11] & +0.02 [+0.00, +0.05] \\
0.5 & +0.01 [$-$0.09, +0.12] & +0.04 [+0.01, +0.09] \\
\bottomrule\end{tabular}\end{table}

\section{Per-group forces}\label{sec:s-groups}
\Cref{tab:s-groups} lists the measured running force of every part in look-alike groups of at least three parts.
\begin{table}[t]\centering\scriptsize\setlength{\tabcolsep}{3pt}
\caption{Running force (N) of every part in look-alike groups with at least 3 parts; homes are numbered, not named.}\label{tab:s-groups}
\resizebox{\linewidth}{!}{\begin{tabular}{llrl}\toprule
home & type & parts & running force per part (N) \\\midrule
H3 & revolute & 8 & 3.8, 1.7, 2.9, 2.3, 2.2, 3.0, 3.7, 3.2 \\
H4 & revolute & 3 & 1.5, 2.7, 1.5 \\
H4 & prismatic & 8 & 6.8, 10.1, 12.3, 4.1, 2.7, 3.7, 3.5, 3.0 \\
H5 & prismatic & 3 & 7.3, 5.5, 6.9 \\
H6 & prismatic & 5 & 6.5, 6.1, 1.9, 3.7, 2.2 \\
H7 & revolute & 6 & 3.3, 3.6, 3.5, 3.6, 4.8, 3.3 \\
H8 & revolute & 4 & 2.3, 3.1, 3.0, 4.3 \\
H9 & prismatic & 6 & 11.1, 6.5, 4.6, 6.8, 4.2, 5.3 \\
H9 & revolute & 8 & 9.5, 11.6, 3.8, 5.5, 3.8, 5.2, 3.7, 4.7 \\
H10 & prismatic & 4 & 6.1, 4.3, 3.6, 4.9 \\
H10 & prismatic & 3 & 3.4, 5.9, 4.6 \\
H10 & prismatic & 8 & 4.5, 3.4, 3.7, 6.9, 4.6, 3.2, 2.8, 3.6 \\
H11 & revolute & 10 & 9.6, 2.9, 5.2, 14.5, 4.4, 4.1, 3.1, 4.5, 2.8, 4.8 \\
H11 & prismatic & 3 & 3.1, 3.6, 3.4 \\
H12 & revolute & 3 & 2.3, 5.0, 2.4 \\
H12 & prismatic & 4 & 1.9, 1.9, 2.6, 1.9 \\
H13 & revolute & 4 & 2.5, 2.3, 2.3, 2.7 \\
H13 & prismatic & 3 & 1.8, 2.4, 2.5 \\
H14 & prismatic & 4 & 4.0, 3.4, 3.3, 17.3 \\
H15 & revolute & 8 & 5.6, 11.4, 14.7, 9.1, 6.3, 11.4, 6.4, 7.2 \\
H16 & revolute & 4 & 7.6, 8.0, 7.0, 6.1 \\
H17 & revolute & 5 & 2.6, 2.5, 14.9, 5.4, 3.8 \\
H17 & prismatic & 10 & 6.9, 5.1, 5.6, 6.8, 4.3, 3.3, 5.9, 10.5, 2.9, 3.4 \\
\bottomrule\end{tabular}}\end{table}

\section{Simulation track}\label{sec:s-sim}
Hoi! contains human pulls that mostly reach the stop, no collisions and only doors and drawers. A simulation track adds fixed robot actions, collision ground truth and a third part type, on one shared set of scenes for every method.

\paragraph{Scenes and physics.} RoboCasa kitchens~\cite{nasiriany2024robocasa} in ManiSkill~3~\cite{tao2025maniskill3} (SAPIEN/PhysX, CPU), \simKitchens{} kitchens (10 layouts $\times$ 12 styles), \simParts{} doors, drawers and microwave doors. Every part receives physics drawn once from a hierarchical model fitted by maximum likelihood on the annotated look-alike groups of Hoi! (kitchen effect, group effort $\mu_g$ around the category mean, part effort $x_i \sim \mathcal N(\mu_g, \omega^2)$ within a group of identical cabinets; the fitted kitchen variance is zero). This fit only builds the simulated world; no method reads it. The physics is realised as PhysX joint friction, joint damping and, for a share of doors, a closing spring. The Coulomb/viscous split is drawn per group, since Hoi! does not separate the two. PhysX friction depends on the joint load, unlike the constant friction of the model, so the simulator does not share the model's dynamics. Each part's friction coefficient is calibrated so that the measured effort matches the drawn one (median deviation 0.4\,\%); the measured effort is the ground truth.

\paragraph{Pulls and truth.} Per part, from the closed state: one training pull (force ramp to 20\,N) and four held-out pulls at a fixed 3, 5, 8 and 12\,N for 1\,s. The held-out pulls stop at a median of \simOpen\,\% of the stroke. Poses and forces carry the measurement noise measured on Hoi! (position \simNoisePos\,mm and rotation \simNoiseRot$^\circ$ RMS, correlated over \simNoiseTc\,s; force \simNoiseF\,N per axis); the truth is the simulator's joint position. A part counts as blocked when it stalls before 90\,\% of its range while touching another fixture (\simBlocked{} parts). \simJammed{} parts that stick in their own cabinet (asset overlaps) are excluded for every method, and the blocked parts and \simInvalid{} parts whose effort did not settle are excluded from the force and motion tasks (\simValid{} parts remain).

\paragraph{Perception and protocol.} One RGB-D view per part, 1.5\,m in front of it and 0.5\,m above it (closer if another fixture hides it), with the part mask from the simulator's segmentation (as the annotated part on Hoi!). Every method reads the same exported files, and every task is scored on the parts for which all compared methods return an output. Folds hold out one layout at a time (styles of a layout share fixtures), and confidence intervals come from bootstrapping layouts. FlowBot3D uses the orientation chosen on the Hoi! development homes, without re-tuning.

\paragraph{Results.} \Cref{tab:published,tab:sim-ablation} give the results. Notes:
\begin{itemize}
\item \emph{Joint type and axis} (\simTOneN{} parts). The joint is estimated from a controlled 20\,N training pull with the simulator's part mask, rather than from a person's pull, which makes it easier than on Hoi!.
\item \emph{Response.} RigPI returned no prediction for \simRigFail{} of the \simRigAttempt{} parts, and for \simNoFit{} more the 20\,N training pull reaches the stop so fast that too few samples remain for the dynamics fit; every method is scored on the \simRigN{} pulls of the remaining \simRigParts{} parts (\simRigKitchens{} kitchens). B1 fails because the fixed pulls stop short.
\item \emph{Force of an untouched part} (\simForceN{} parts). How alike look-alikes are is set by the Hoi! fit, so the size of this gain follows from that fit; the result to read here is the effect of grouping. In these kitchens, cabinets of different sizes sit side by side and face the same way, so the look-alike rule, applied to the rendered part points and the part labels as on Hoi!, merges true look-alike groups (\simMixed\,\% of parts are in a merged group). With the true groups, the motion CRPS of PhysVLMM is \simOursOracle.
\item \emph{Collision.} \simCollBlocked{} of the \simBlocked{} blocked parts have a joint estimate and are checked. With the part approximated by its bounding box, false alarms on free parts are \simFABox\,\%.
\end{itemize}

\end{document}